%% file: main_arxiv.tex
\documentclass{article}

\usepackage[final]{corl_2026_arxiv} %
\usepackage{amsmath,amssymb,amsthm}
\usepackage{mathtools}

\DeclarePairedDelimiterX{\infdivx}[2]{(}{)}{%
  #1\;\delimsize\|\;#2%
}
\usepackage{authblk}
\usepackage{bm}
\usepackage{xspace}
\usepackage{graphicx}
\usepackage{wrapfig}
\usepackage{subcaption}
\usepackage{enumitem}
\usepackage{booktabs}
\usepackage{cleveref}

\newtheorem{propositionmain}{Proposition}
\newtheorem{definitionmain}{Definition}

\usepackage{tcolorbox}
\tcbuselibrary{theorems}
\usepackage{thmtools}

\definecolor{ourcolor}{HTML}{00A89D}
\definecolor{ourpurple}{HTML}{CB297B}
\definecolor{ourblue}{HTML}{0076BA}
\definecolor{ourmiddle}{HTML}{66509B}
\definecolor{ourlightblue}{HTML}{F5FAFE}
\definecolor{ourlightmiddle}{HTML}{F7F6F9}

\declaretheoremstyle[
    headfont=\bfseries,
    bodyfont=\normalfont,
    spaceabove=10pt, spacebelow=10pt,
    headpunct={},
    postheadspace=1em,
    mdframed={
        backgroundcolor=ourcolor!5,
        linecolor=ourcolor!50!black,
        linewidth=1pt,
        roundcorner=4pt
    }
]{colorboxed}

\declaretheoremstyle[
    headfont=\bfseries,
    bodyfont=\normalfont,
    spaceabove=10pt, spacebelow=10pt,
    headpunct={},
    postheadspace=1em,
    mdframed={
        backgroundcolor=ourblue!5,
        linecolor=ourblue!50!black,
        linewidth=1pt,
        roundcorner=4pt
    }
]{bluecolorboxed}

\declaretheoremstyle[
    headfont=\bfseries,
    bodyfont=\normalfont,
    spaceabove=10pt, spacebelow=10pt,
    headpunct={},
    postheadspace=1em,
    mdframed={
        backgroundcolor=ourpurple!5,
        linecolor=ourpurple!50!black,
        linewidth=1pt,
        roundcorner=4pt
    }
]{purplecolorboxed}

\declaretheorem[name=Theorem,style=colorboxed]{theorem}
\declaretheorem[name=Lemma,style=purplecolorboxed]{lemma}

\declaretheorem[name=Definition,style=bluecolorboxed]{definition}

\declaretheorem[name=Assumption,style=bluecolorboxed]{assumption}

\crefname{assumption}{Assumption}{Assumption}

\title{Learning to Act While Waiting: RL Finetuning of Generalist Robot Policies Under Inference Latency}

\newcommand{\suppdetail}[1]{}

\newcommand{\method}{\textsc{Arli}\xspace}
\newcommand{\loose}{\looseness=-1}

\newtoggle{arxiv}
\toggletrue{arxiv}

\author[1]{Brian Zhu$^*$}
\author[1]{Momen Khalil$^*$}
\author[2]{E Harrison$^*$}
\author[1]{Emanuele Poggi$^*$}
\author[1]{Philipp Schmitt}
\author[1]{Bernd Kast}
\author[1]{Philine Meister}
\author[2]{Pranav Atreya}
\author[2]{Qiyang Li}
\author[1]{Finn Ferchau}
\author[1]{Cesar Colmenero}
\author[1]{Yash Shahapurkar}
\author[1]{Gokul Narayanan}
\author[1]{Melih Erdogan}
\author[1]{Kai Wurm}
\author[1]{Georg von Wichert}
\author[3,4,2]{Oier Mees}
\author[1]{Eugen Solowjow}
\author[2]{Andrew Wagenmaker}
\author[2]{Sergey Levine}
\affil{
    Siemens
    \qquad
    $^2$ UC Berkeley
    \qquad
    $^3$ Microsoft
    \qquad
    $^4$ ETH Zurich
}

\begin{document}

\maketitle
\newcommand\blfootnote[1]{%
  \begingroup
  \renewcommand\thefootnote{}\footnote{#1}%
  \addtocounter{footnote}{-1}%
  \endgroup
}
\renewcommand{\subsectionautorefname}{Subsection}
\blfootnote{$^*$Equal Contribution}
\vspace{-5em}
\begin{abstract}
  While reinforcement learning (RL) allows generalist robot policies to continually improve during deployment, the large model size of modern generalist policies, such as VLAs, poses a fundamental obstacle to effective RL improvement. In particular, their severe inference latency---which can lead to pauses or jerky movements---can alter the effective environment dynamics and, if not correctly accounted for, break the Markov assumption that RL relies on, causing standard RL algorithms to fail completely. In this work, we introduce a latency-aware framework, \emph{\textbf{A}synchronous \textbf{RL} with \textbf{I}ntermediate Information} (\method), that enables RL-based improvement of generalist policies under inference delays. 
  Our framework builds on asynchronous inference approaches, which interleave action generation with execution to hide latency, and addresses its incompatibility with RL by providing a low-latency RL policy design that maximizes reactivity within the inference window through two contributions: state augmentations that restore near-Markovian structure by incorporating committed actions and a mid-inference observation. 
  We evaluate our approach across simulated and real-world manipulation tasks, and find that it enables effective finetuning under inference delays where standard RL fails entirely, even matching or exceeding the performance of standard RL in idealized no-latency settings.  
  Website: \texttt{\href{https://async-rl-intermediate-information.github.io/}{https://async-rl-intermediate-information.github.io/}}

\end{abstract}

\keywords{RL Steering, Inference Latencies, Real-time Control}

\input{body/introduction}

\input{body/method}

\input{body/experiments}

\input{body/limitations}

\clearpage
\acknowledgments{This research was partially supported by Siemens and ONR N00014-25-1-2060.}

\bibliography{arxiv_bib}

\newpage
\appendix
\input{body/appendix}
\end{document}

%% file: body/introduction.tex
\newcommand{\arli}{\textsc{Arli}\xspace}

\begin{figure}[ht]
    \centering
    \includegraphics[width=0.94\linewidth]
    {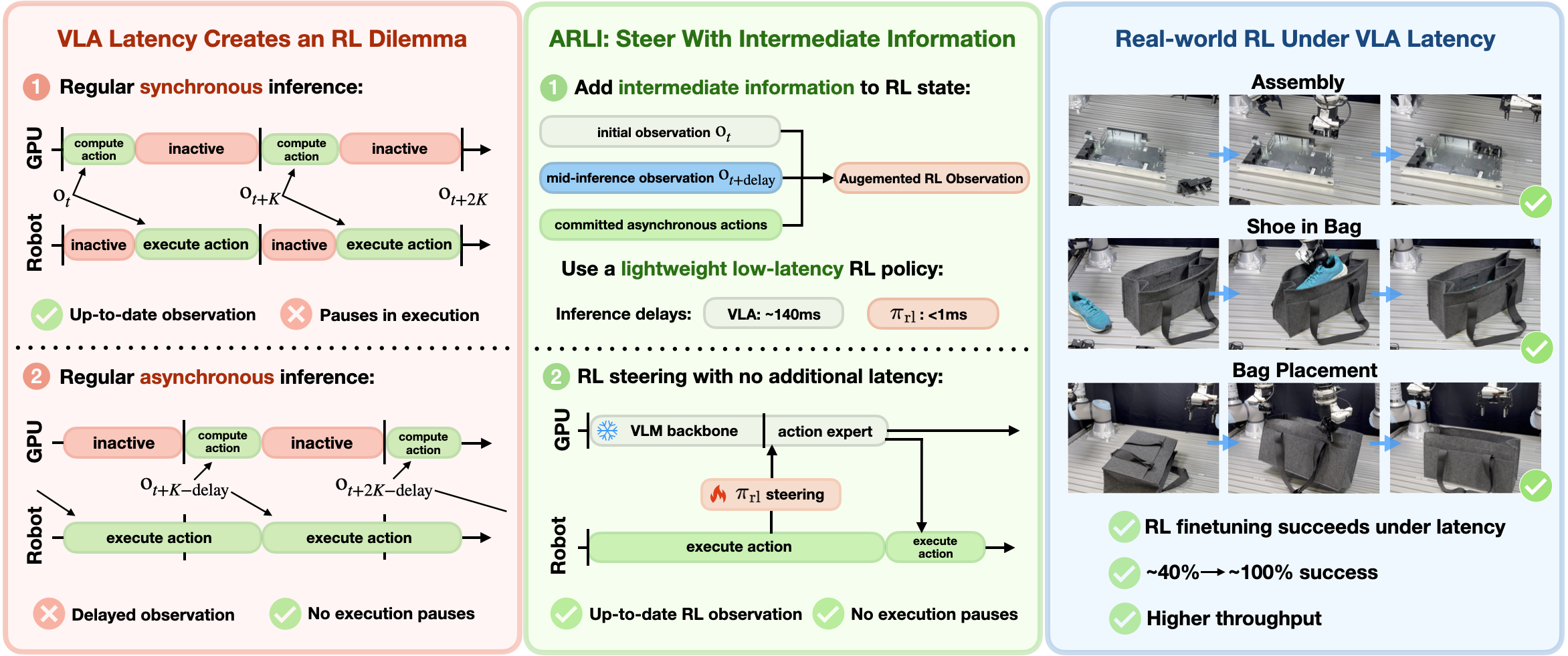}
    \caption{\textbf{Enabling RL improvement of generalist robot policies under inference latency:} Generalist policies incur significant action inference latency delays, either forcing pauses during inference or relying on stale observations, making RL improvement brittle. \arli uses intermediate actions and updated asynchronous observations to steer the next action chunk, enabling effective real-world improvement under asynchronous inference, and real-world VLA improvement.}
    \label{fig:teaser_fig}
    \end{figure}

\vspace{-1.25em}
\section{Introduction}
\vspace{-0.75em}

Enabling robots to learn and improve through experience collected in deployment has proven a critical step in achieving highly performant robotic behaviors. In particular, with the advent of \emph{generalist} robot policies---policies trained on diverse, large-scale datasets to perform a variety of tasks \cite{brohan2022rt,gu2023rt,octo_2023,Doshi24-crossformer,kim2024openvla,black2024pi_0,bjorck2025gr00t,team2025gemini,liu2025rdt,intelligence2025pi_,zha2026lap}---online improvement and adaptation approaches have enabled significantly more reliable and high-precision behaviors than are typically achievable via offline training alone ~\cite{wagenmaker2025steering,amin2025pi,li2025gr,sun2026prior,murray2026flowdagger,chen2026dgac}. While significant progress has been made developing RL-based approaches for improving generalist policies, the most promising results remain limited to constrained, lab-like settings. As generalist policies are increasingly deployed in real-world settings, enabling RL improvement under practical deployment constraints is critical to achieving effective real-world robotic control.

\iftoggle{arxiv}{
A key constraint present in the deployment of generalist policies is \emph{inference latency}. Generalist policies, such as vision-language-action models (VLAs), are typically very large---commonly billions of parameters \cite{kim2024openvla,black2024pi_0,bjorck2025gr00t,zha2026lap}---and generating an action with a generalist policy can take a non-trivial amount of time. For example, on a standard consumer-grade GPU, running a single action generation step on $\pi_0$, a commonly used open-source VLA, takes approximately 100 milliseconds \cite{black2024pi_0}, while other open-source models can take over 300 milliseconds per action generation \cite{kim2024openvla,kim2025fine}. Such inference delays can lead to pauses or jerky movements in deployment, altering the effective dynamics of the environment and leading to poor policy performance, especially in settings that require high reactivity.
While a growing body of work has sought to address challenges introduced by policy latency \cite{black2025training,shukor2025smolvla,black2026real}, these works typically focus simply on policy \emph{deployment}, and do not investigate how latency affects the ability of RL-based approaches to learn effectively.

In this work we seek to close this gap and enable effective RL improvement of generalist policies under inference latency constraints. As a starting point, we build on several recent works that propose \emph{asynchronous inference} strategies, where the generalist policy begins computing the next action while the current action is still being executed \cite{shukor2025smolvla,black2026real}. While asynchronous inference, if correctly instantiated, has been shown to significantly mitigate the effects of inference latency, it comes at a cost: if we begin policy inference while the previous action is being executed, we cannot condition on the actual environment state we will be at when we play this action, since we have not yet reached this state. As a result, naively combining RL improvement with asynchronous inference leads to a \emph{non-Markovian effective state}, and causes standard RL approaches to fail to learn.
}{
A key challenge in deploying generalist policies is \emph{inference latency}: large vision-language-action models (VLAs), often containing billions of parameters \cite{kim2024openvla,black2024pi_0,bjorck2025gr00t,zha2026lap,gao2026steervla}, can require 100--300+ ms to generate actions on consumer GPUs \cite{black2024pi_0,kim2024openvla,kim2025fine}. Such delays can cause pauses or jerky motions that alter the effective environment dynamics and degrade performance, especially in settings that require high reactivity. Although prior work has proposed \emph{asynchronous inference} strategies that compute the next action while the current action is being executed to mitigate these effects \cite{shukor2025smolvla,black2026real}, these approaches focus primarily on deployment and introduce a key challenge for RL: since inference begins before the current action finishes executing, the policy cannot condition on the actual state where the action will be played. Thus, naively combining RL with asynchronous inference produces a \emph{non-Markovian effective state}, causing standard RL methods to fail.
}

To address this challenge and enable RL finetuning with asynchronous inference, we develop \arli, or \emph{\textbf{A}synchronous \textbf{RL} with \textbf{I}ntermediate Information}, that overcomes the non-Markovian state with careful state augmentation. In particular, \arli is inspired by two key insights. First, by including the \emph{intermediate actions} in the state, we can give the RL policy significantly more predictive information about future states, allowing it to more accurately finetune the generalist policy's actions.
Second, in many cases, \emph{RL inference is required at a later stage than generalist policy inference}, and we can thus integrate the RL corrections at a later time than when we begin generalist policy inference. Given this, the RL policy can be conditioned on a more up-to-date \emph{intermediate state} than what the generalist policy has access to, mitigating the effect of the asynchronous inference delay.

We show that \method leads to effective improvement of generalist policies under practical latency constraints. In particular, we evaluate \method on several simulated tasks where high reactivity is required, and find that it yields significantly more efficient RL improvement, in many cases enabling improvement when naive approaches to combining RL with asynchronous inference fail. We then show that this holds in three challenging real-world tasks on a bimanual UR5e robot, finding that \method enables real-world policy improvement under asynchronous inference in regimes where both naively combining RL with asynchronous inference, and attempting to use synchronous inference, fail.\loose

\vspace{-0.25em}
\section{Related Work}
\vspace{-0.25em}

\textbf{Robot policy deployment under latency constraints.}
With the advent of ``generalist'' robotic foundation models, inference delays have become a central concern in policy deployment, as the forward passes of such policies  routinely exceed the controller's sampling period~\cite{rt22023arxiv,kim2024openvla,black2024pi_0,pai25,ye2026world,hou2026world}. Action chunking~\cite{zhao2023learning} amortizes inference cost by producing $k$ actions per forward pass, and is now standard in modern robot policies~\cite{chi2023diffusion,dasari2024ingredients,octo_2023,black2024pi_0,pertsch25-fast,intelligence2025pi_}. Action chunking only partially addresses the latency problem, however, as synchronous execution incurs periodic pauses between chunks, yet naive asynchronous execution introduces discontinuities at chunk boundaries~\cite{black2025training,black2026real}. Recent work mitigates these discontinuities through temporal ensembling~\cite{zhao2023learning}, rejection sampling~\cite{liu2025bidirectional}, or diffusion inpainting~\cite{black2026real}. A separate line of sim-to-real work injects delays into simulation during training to obtain delay-robust policies~\cite{tan2018sim,hwangbo2019learning}. All of these methods, however, target deployment of a fixed pretrained policy and do not consider online adaptation.\loose

\textbf{RL under latency constraints.}
Prior work has studied reinforcement learning and control under delayed information, actions, and rewards, from early formulations of delayed closed-loop control and delayed-state/action MDPs \cite{altman2003closed,katsikopoulos2003markov,walsh2009learning,schuitema2010control} to more recent methods for stochastic delays and real-time decision making \cite{firoiu2018human,bouteiller2021reinforcement}. Closely related work also considers concurrent control, where agents continue acting while new decisions are being computed \cite{xiao2020thinking}. These works largely consider generic RL settings or train policies from scratch, and do not address the action-chunked asynchronous regime of modern diffusion and VLA policies, the setting we target in this work.

\textbf{RL for robotics.}
Reinforcement learning has played a key role enabling highly performant policies for robotic control, spanning domains from locomotion \cite{smith2022walk,smith2022legged} to manipulation \cite{zhu2020ingredients,luo2024serl,mendonca2024continuously,rosete2022tacorl,luo2025precise}. 
A common strategy to improve the sample efficiency of RL is to rely on simulators, transferring sim-learned policies to the real world \cite{cutler2014reinforcement,rajeswaran2016epopt,tobin2017domain,peng2018sim,chebotar2019closing,lee2020learning,kumar2021rma}, yet such approaches are often limited by the sim-to-real gap \cite{wagenmaker2024overcoming}. With the advent of generalist robot policies, significant focus has been devoted to enabling improvement and adaptation of such policies \cite{zhang2024grape,mark2024policy, nakamoto2024steering, chen2025conrft, hu2025flare, ankile2025imitation, wagenmaker2025steering, dong2025matters, dong2025expo, xiao2025self, xu2026rl, sun2026prior, amin2025pi,  guo2025improving, lu2025vla, li2025gr, myers2024policy, liu2025can}. In this work we build on \cite{wagenmaker2025steering}, which enables RL improvement of pretrained policies by steering the input noise to the policy's denoising process (in the case when the pretrained policy is a diffusion or flow model). However, none of these works explicitly deal with enabling RL finetuning under latency constraints.

%% file: body/method.tex
\newcommand{\cS}{\mathcal{S}}
\newcommand{\cA}{\mathcal{A}}
\newcommand{\cM}{\mathcal{M}}
\newcommand{\cN}{\mathcal{N}}
\newcommand{\R}{\mathbb{R}}
\newcommand{\Exp}{\mathbb{E}}
\newcommand{\pipt}{\pi_{\mathrm{pt}}}
\newcommand{\piptvlm}{\pi_{\mathrm{pt}}^{\mathrm{vlm}}}
\newcommand{\piptae}{\pi_{\mathrm{pt}}^{\mathrm{ae}}}
\newcommand{\ba}{\bm{a}}
\newcommand{\tdelay}{t_{\mathrm{delay}}}
\newcommand{\tdelayrl}{t_{\mathrm{delay}}^{\mathrm{rl}}}
\newcommand{\Pinit}{P_{\mathrm{init}}}
\newcommand{\pirl}{\pi_{\mathrm{rl}}}
\newcommand{\srl}{s^{\mathrm{rl}}}

\newcommand{\cE}{\mathcal{E}}
\newcommand{\Prob}{\mathbb{P}}
\newcommand{\rtc}{\textsc{Rtc}\xspace}
\newcommand{\dsrl}{\textsc{Dsrl}\xspace}
\newcommand{\Deltaact}{\Delta_{a}}

\section{Preliminaries}\label{sec:prelim}

We consider decision-making in Markov decision processes (MDPs). An MDP is denoted by a tuple $\cM = (\cS,\cA,P,\Pinit,r,\gamma)$ where $\cS$ is the state space, $\cA$ the action space, $P : \cS \times \cA \rightarrow \triangle_{\cS}$ the transition kernel, $\Pinit \in \triangle_{\cS}$ the initial state distribution, $r : \cS \rightarrow \R$ the reward, and $\gamma \in [0,1]$ the discount factor. Interaction with the MDP proceeds in episodes. First the environment samples an initial state $s_1 \sim \Pinit$, the agent then selects some action $a_1 \in \cA$, the environment transitions to state $s_2 \sim P(s_1,a_1)$, and so forth, proceeding until the episode terminates. A policy denotes a mapping from states to actions, $\pi : \cS \rightarrow \triangle_{\cA}$. For a policy $\pi$ and state $s \in \cS$, the value function quantifies the expected discounted reward, $V^\pi(s) := \Exp^\pi[\sum_{t=1}^\infty \gamma^{t-1} r(s_t) \mid s_1 = s]$, where $\Exp^\pi[\cdot]$ denotes the expectation over trajectories induced by executing $\pi$ on $\cM$. The value of a policy, $V(\pi) := \Exp_{s_1 \sim \Pinit}[V^\pi(s_1)]$, denotes its expected reward over trajectories. The typical goal of RL, and the goal we consider in this work, is to learn a policy that maximizes $V(\pi)$.

\iftoggle{arxiv}{
\textbf{Pretrained policies and inference delays.}
In this work, we assume we are given an initial pretrained policy $\pipt$, for example, a policy trained via behavioral cloning on human demonstrations. We make several assumptions on $\pipt$. 
First, we assume that instead of predicting a single action at each step, $\pipt$ predicts an \emph{action chunk}, a sequence of $k$ actions, $A = (a_1,\ldots,a_k)$. Typically, after producing $A$ the policy executes the first $n$ steps in the chunk before recomputing a new chunk. Second, we assume that $\pipt$ is a diffusion or flow policy \cite{chi2023diffusion}. As such, $\pipt$ not only takes a state $s$ as input, but a random noise $w$, typically sampled from $\mathcal{N}(0, I)$. We note that both of these assumptions are standard design choices in modern robot learning, so this is not a major restriction \cite{octo_2023,black2024pi_0,bjorck2025gr00t,team2025gemini,liu2025rdt,intelligence2025pi_,zha2026lap}.
When $\pipt$ is a VLA or other large model, simply computing $A_t = \pipt(s_t, w_t)$ for $w_t \sim \cN(0,I)$ can take a significant amount of time. Formally, we assume that we can generate an action from $\pipt$ but that this generation will require $\tdelay$ steps (for simplicity, we assume $\tdelay < k \cdot \Deltaact$, where $\Deltaact$ is the time interval we execute each action). We will overload notation somewhat and let $\tdelay$ refer to both the inference delay (in seconds) as well as the number of environment ``steps'' it occupies (where each environment step is an interval of length $\Deltaact$).
}{
\textbf{Pretrained policies and inference delays.}
We assume access to a pretrained policy $\pipt$ (e.g. a VLA) that predicts an action chunk $A=(a_1,\ldots,a_k)$ rather than a single action. After predicting a chunk, the policy executes the first $n$ actions before recomputing a new chunk. We further assume $\pipt$ is a diffusion or flow policy \cite{chi2023diffusion}, meaning it conditions on both the state $s$ and a noise input $w \sim \cN(0,I)$; both assumptions are standard in modern robot learning \cite{team2024octo,black2024pi_0,bjorck2025gr00t,team2025gemini,liu2025rdt,intelligence2025pi_,zha2026lap}. When $\pipt$ is a VLA or similarly large model, generating $A_t = \pipt(s_t, w_t)$ can incur substantial inference latency. We therefore assume action generation requires $\tdelay$ steps, with $\tdelay < k \cdot \Deltaact$, where $\Deltaact$ is the execution interval between actions (the environment ``step''). We will overload notation somewhat and refer to $\tdelay$ as both the inference delay (in seconds) and the number of environment steps it occupies.\loose}

\textbf{Asynchronous inference.}
To reduce effective latency, recent works have suggested computing fresh actions \emph{asynchronously while executing previous action chunks} \cite{black2026real, shukor2025smolvla}. For example, assume at step $t-k$ we have computed action chunk $A_{t-k}$, and are beginning to take these actions in our environment. Instead of waiting until step $t$ to compute $A_{t}$, we can begin inference at some step $t' < t - \tdelay$, such that the next action chunk $A_{t'}$ will be ready to execute as soon as $A_{t-k}$ has been executed. When step $t$ is reached and the action chunk $A_{t-k}$ has finished, we execute $A_{t'}$ starting at step $\tdelay$---the step in $A_{t'}$ that would correspond to timestep $t$. The challenge, of course, is that at step $t'$ we have access only to state $s_{t'}$, and so must produce actions based on $s_{t'}$, leading to potentially out-of-date information and poor action predictions.

\textbf{Real-Time Chunking (\rtc).}
To mitigate the effects of this inference delay, \cite{black2026real} proposes \emph{real-time chunking} (\rtc), which seeks to ensure continuity between each generated action chunk. Specifically, \rtc incorporates  actions $[A_{t-k}]_{t-t':k}$, the actions from $A_{t-k}$ that will be played after time $t'$, into the inference call at step $t'$ to improve temporal consistency between chunks. This is achieved via \emph{diffusion inpainting} and results in action predictions that mitigate potentially jerky movements between chunks, improving overall performance. Notably, however, \rtc still only relies on state information available at the start of the inference call, and is not able to update its generation if the environment changes during inference. Furthermore, \rtc utilizes intermediate actions solely to ensure temporal consistency, and is not able to update current predictions based on these actions.

\textbf{Diffusion Steering via Reinforcement Learning (\dsrl).}
We will consider \dsrl \cite{wagenmaker2025steering} as our base RL finetuning algorithm.
\dsrl is an approach for RL improvement of diffusion or flow policies and operates by tuning the \emph{denoising process} of $\pipt$. As noted, if $\pipt$ is a diffusion or flow policy, in standard deployment it generates actions by first sampling ``noise'' vector $w \sim \cN(0,I)$, and then \emph{denoising} $w$ to a robot action. \dsrl trains a lightweight policy, $\pirl$, that instead selects the input noise to be denoised by $\pipt$, allowing the actions produced by $\pipt$ to be ``steered'' to desired behaviors. Formally, if at step $t$ we wish to generate $A_t = \pipt(s_t, w_t)$, instead of $w_t \sim \cN(0,I)$ we would sample $w_t \sim \pirl(s_t)$, then pass $w_t$ to $\pipt$ to be denoised. By observing the response and updating $\pirl$ accordingly, we can improve action generation, enabling more effective policy performance. As we will see in the following, \dsrl enables significantly faster improvement in our setting than other RL approaches such as residual RL \citep{johannink2018residual,ankile2025imitation, yuan2024policy,ankile2025residual}.\loose

\section{Efficient RL Finetuning Under Inference Delays}\label{sec:method}
\begin{figure}[t]
    \centering
    \includegraphics[width=0.9\linewidth]{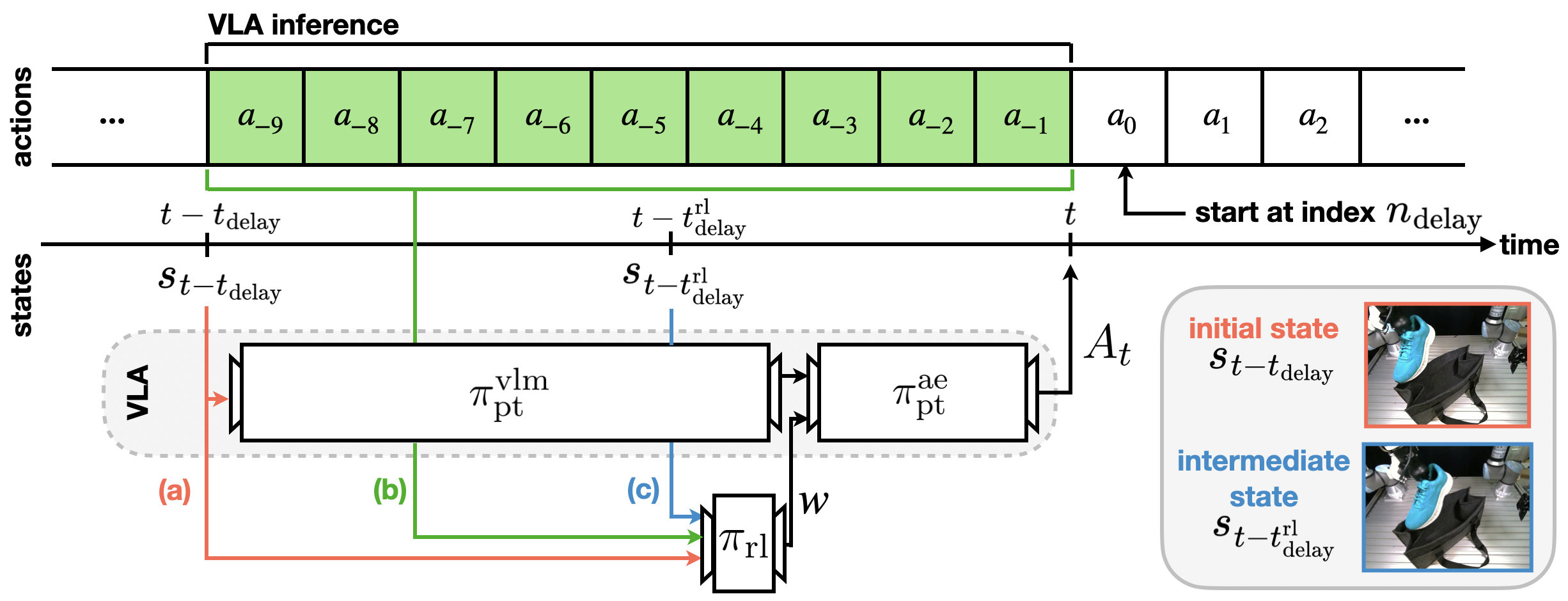}
    \caption{An overview of our approach, \method. Inference begins $\tdelay$ steps before the next action chunk is required, in which the VLM backbone $\piptvlm$ takes state $s_{t-\tdelay}$ as input. At step $t-\tdelayrl$,  RL policy $\pirl$ must be run to ensure that the action expert $\piptae$ is provided a noise input $w$ once VLM inference finishes. $\pirl$ is conditioned on (a) the initial state $s_{t-\tdelay}$, (b) intermediate actions $a_{-9:0}$, and (c) intermediate state $s_{t-\tdelayrl}$. Once the action expert's inference is complete, the next action played, $a_0$, starts at step $n_\text{delay}$ within the newly-generated chunk $A_t$.}
    \vspace{-0.5cm}
    \label{fig:method_fig}
\end{figure}

\iftoggle{arxiv}{
As we will see, in our real-world settings of interest, synchronous inference---where the policy pauses between action chunks to compute the next chunk---introduces problematic delays, and asynchronous inference, as outlined in \Cref{sec:prelim}, is required to enable effective performance. 
We therefore focus on the asynchronous inference setting, and how we can enable efficient online improvement under asynchronous inference. In the following, we outline our proposed approach, asynchronous RL with intermediate information (\method); see \Cref{fig:method_fig} for an overview.\loose
}{
As a starting point, we therefore focus on the asynchronous inference setting outlined in \Cref{sec:prelim}, as the delays introduced by fully synchronous inference are often problematic for  applications of interest. In the following, we outline our proposed approach, \method; see \Cref{fig:method_fig} for an overview.\loose
}

\subsection{Enabling \dsrl Finetuning with Inference Latency}
To enable efficient finetuning of $\pipt$, we consider \dsrl as our core RL algorithm. Assume that we begin the inference of $\pipt$ at step $t - \tdelay$ and condition it on state $s_{t-\tdelay}$.
Naive application of \dsrl to finetune $\pipt$ would condition the noise policy $\pirl$ on the state that $\pipt$ is conditioned on---in this case $s_{t-\tdelay}$---and generate an initial noise to steer the action generation of $\pipt$ based on this state. 
In the asynchronous inference setting, however, the computed action is not played until step $t$---there is a delay of $\tdelay$ steps between when the RL policy computes an action and this action is played---and,
as we will see in \Cref{sec:experiments}, naive application of \dsrl in the asynchronous inference setting therefore does not lead to effective finetuning.
To enable effective RL improvement under asynchronous inference, we consider three key modifications, outlined below.\loose

\textbf{\method actions: Conditioning $\pirl$ on intermediate actions.}
Even though the most current state we have access to when inference begins is $s_{t-\tdelay}$, we also have access to the \emph{intermediate actions}, $a_{t-\tdelay:t} := (a_{t-\tdelay}, a_{t-\tdelay+1},\ldots , a_{t-1})$---the actions from the previously computed action chunk that will be executed during inference. While not fully predictive of the state at time $t$ in general, knowing $s_{t-\tdelay}$ and $a_{t-\tdelay}, a_{t-\tdelay+1},\ldots , a_{t-1}$ gives significant information about where we will be at time $t$, as we can then estimate how the environment evolves from $s_{t-\tdelay}$.

Our first modification is to condition $\pirl$ not just on $s_{t-\tdelay}$, but also the intermediate actions $a_{t-\tdelay:t}$ (see part (a) of \Cref{fig:method_fig}). That is, denoting the ``RL state''---the state we condition the RL policy on---as $\srl_t$, we set $\srl_{t} \leftarrow (s_{t-\tdelay}, a_{t-\tdelay:t})$. Note that while it is not straightforward to condition $\pipt$ on intermediate actions (since $\pipt$ is a frozen policy that cannot easily incorporate additional conditioning information), $\pirl$ can learn to utilize such information as needed, adapting its behavior as it learns how $a_{t-\tdelay:t}$ affects the future state. %

\begin{wrapfigure}[14]{r}{0.56\textwidth}
\centering
\vspace{-0.5em}
\includegraphics[width=1.0\linewidth]{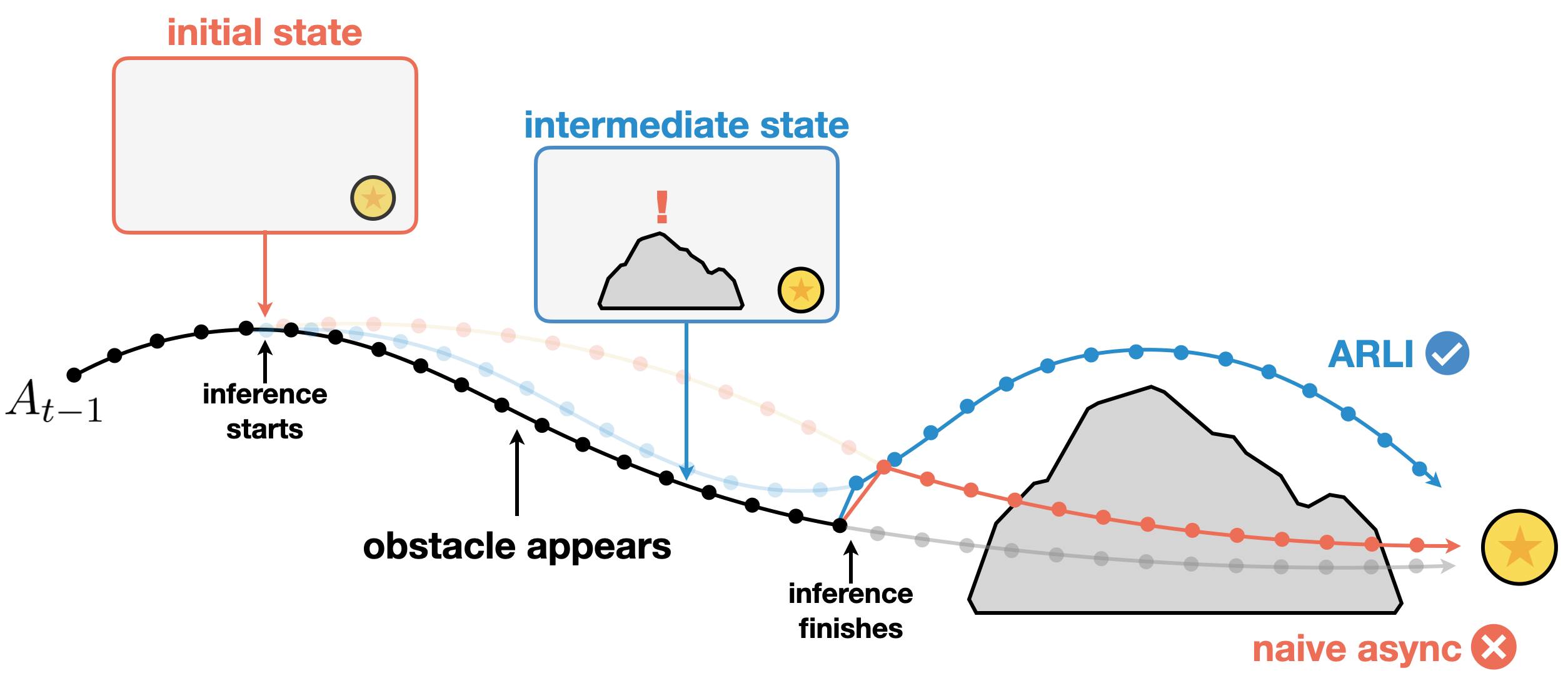} 
\caption{The benefit of intermediate state conditioning: In naive asynchronous inference, the initial state does not capture stochastic events (e.g., moving obstacles).  
By having the ability to condition on an intermediate state, our policy is capable of reacting to disturbances.}
\label{fig:int_state_figure}
\end{wrapfigure}
\textbf{\method state: Conditioning $\pirl$ on intermediate state by decomposing inference delays.} %
While incorporating the intermediate actions into the RL state enables more effective action selection to be played at a future state, it does not account for potential disturbances that may occur between $t-\tdelay$ and $t$; see Figure~\ref{fig:int_state_figure} for an example of this scenario.

To allow for reactivity to such perturbations, we note that most VLAs are decomposed into two components: the \emph{VLM backbone} ($\piptvlm$) which takes as input images and language and outputs a sequence of embeddings, and the \emph{action expert} ($\piptae$) which instantiates the denoising process, taking as input the embeddings produced by $\piptvlm$ and producing the final action. In many cases, $\piptvlm$ is significantly larger and takes longer to run inference for than $\piptae$; the VLA $\pi_0$ spends approximately 2/3 of each inference call on inference of $\piptvlm$, and only the last 1/3 on $\piptae$ inference \cite{black2024pi_0}.

Since \dsrl modifies the behavior of $\pipt$ by selecting the input noise for the denoising process (which is only initialized once $\piptae$ is called), we can begin inference on $\pirl$ right before $\piptae$ is called to gain access to a more recent state. If we denote the combined inference time of $\pirl$ and $\piptae$ as $\tdelayrl$, the latest we can call $\pirl$ is at timestep $t - \tdelayrl$ without incurring additional inference delay. Therefore, we propose further conditioning $\pirl$ on the state at this timestep (see part (b) of \Cref{fig:method_fig}), transforming our RL state to be $\srl_t \leftarrow (s_{t-\tdelay}, a_{t-\tdelay}, s_{t-\tdelayrl})$. Although this intermediate state is still slightly delayed, the dominant latency comes from the VLM backbone, making $\srl_t$ substantially more predictive of $s_t$ compared to using only the initial state and intermediate actions.

\textbf{RL with \rtc.}
Although asynchronous inference does not require \rtc, prior work has shown that \rtc improves zero-shot policy deployment by encouraging temporal consistency between predicted action chunks and reducing discontinuities across inference calls. While such behavior could in principle be learned by $\pirl$, explicitly enforcing it through \rtc ensures this behavior is reliably executed without affecting the learning speed.
Specifically, \rtc freezes the noise for the first $\tdelay$ denoising steps and applies an inpainting guidance term to maintain continuity in later steps. In our approach, we select the initial noise using \dsrl while still incorporating \rtc's inpainting guidance during denoising. We emphasize that, while \rtc incorporates intermediate actions in the VLA's inference, it is \textit{only} to preserve continuity between action chunks; in contrast, $\pirl$ can leverage them to learn substantially richer behaviors.

\newcommand{\Pidelay}{\Pi_{\mathrm{delay}}}
\newcommand{\cD}{\mathcal{D}}
\newcommand{\cO}{\mathcal{O}}

\subsection{RL Under Inference Delays Provably Succeeds}
While \arli seeks to condition the RL policy on as up-to-date state information as possible, we must still plan based on a state delayed by $\tdelay$ steps. We next show that, under certain environment conditions, this will not result in significant policy suboptimality:

\begin{definitionmain}[Delayed Oracle Optimality Gap]
\label{def:delay-oracle}
An MDP $\mathcal{M}$ exhibits $\omega_d$-delayed optimality gap if for any $s_t, a_{t:t+k}$, 
\begin{align}
\Big |\underbrace{Q^\star_{\mathrm{ac}}(s_t, a_{t:t+k})}_{(a)} - \underbrace{\mathbb{E}_{P(\cdot \mid s_{t}, a_{t:t+k})}[R_{t:t+k-d} + \gamma^{k-d} V^\star_{\mathrm{ac}}(s_{t+k-d}, a_{t+k-d:t+k})]}_{(b)} \Big | \leq \omega_{d},
\end{align}
where $Q^\star_{\mathrm{ac}}$ denotes the optimal $Q$-function for action chunk $k$ policies, and we define $V^\star_{\mathrm{ac}}(s_t, a_{t:t+d}) = {\max_{a_{t+d:t+k}}} Q^\star_{\mathrm{ac}}(s_t, a_{t:t+k})$.
\end{definitionmain}
This quantifies the difference in performance between the optimal achievable performance playing $k$-step action chunks $A_t$ given the current observation $s_t$ (term (a)), and the performance if we fully commit to the same action chunk as (a), but then make a decision for the next action chunk $d$-steps early, based on the delayed observation at that step (term (b)).
The following result shows that we can apply standard $Q$-learning approaches to learn effective policies despite our $\tdelay$ observation delay.\loose
\newcommand{\pihat}{\widehat{\pi}}
\begin{propositionmain}[Informal]\label{prop:delay_subopt}
Consider applying standard $Q$-learning  to observations delayed by $\tdelay$ steps and action chunks of length $k$. Then the policy learned through this, $\pihat^{\tdelay}$, satisfies:
$V^{\star}_{\mathrm{ac}} - V(\pihat^{\tdelay}) \le  \frac{1}{1-\gamma^{k-\tdelay}} \cdot \omega_{\tdelay} $ for $V^{\star}_{\mathrm{ac}}$ the optimal value of a policy that places action chunks of length $k$ with no delay, and
$\omega_{\tdelay}$ the oracle optimality gap of $\cM$ under delay $d \leftarrow \tdelay$.
\end{propositionmain}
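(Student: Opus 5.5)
The plan is to reduce the delayed-observation problem to a \emph{lifted} MDP whose optimal value is characterized exactly by term (b) of \Cref{def:delay-oracle}, and then run a telescoping (performance-difference) argument over decision points spaced $k-\tdelay$ steps apart. Concretely, I would define the augmented state $\tilde s = (s_{t}, a_{t:t+\tdelay})$: the observation at the moment inference starts, together with the committed actions that will be played while inference runs. This is exactly the \method action state $\srl_t$.

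In the lifted process, a decision at $\tilde s$ selects the remaining actions $a_{t+\tdelay:t+k}$. The environment then evolves for $k-\tdelay$ steps, and the next decision is made at $\tilde s' = (s_{t+k-\tdelay}, a_{t+k-\tdelay:t+k})$. Because the full chunk $a_{t:t+k}$ together with $s_t$ determines the law of this transition, $\tilde s$ is Markov. The process is therefore an MDP with effective discount $\gamma^{k-\tdelay}$ per decision and per-decision reward $R_{t:t+k-\tdelay}$. Standard $Q$-learning on this MDP (taken, as in the informal statement, to converge to its optimal $Q$-function $\tilde Q^\star$) returns $\pihat^{\tdelay}$, which is greedy for $\tilde Q^\star$, so $V(\pihat^{\tdelay}) = \tilde V^\star$.

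The key step is to compare $\tilde V^\star$ with $V^\star_{\mathrm{ac}}$ through the Bellman operators. Let $\tilde{\mathcal T}$ be the lifted Bellman operator:
\[
(\tilde{\mathcal T} f)(s_t,a_{t:t+\tdelay}) = \max_{a_{t+\tdelay:t+k}} \mathbb{E}\big[R_{t:t+k-\tdelay} + \gamma^{k-\tdelay} f(s_{t+k-\tdelay}, a_{t+k-\tdelay:t+k})\big].
\]
Apply $\tilde{\mathcal T}$ to $f = V^\star_{\mathrm{ac}}(\cdot,\cdot)$ as defined in \Cref{def:delay-oracle}. Inside the max we obtain term (b), which is within $\omega_{\tdelay}$ of term (a), $Q^\star_{\mathrm{ac}}(s_t,a_{t:t+k})$. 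Taking the max over $a_{t+\tdelay:t+k}$ on both sides then gives
\[
\|\tilde{\mathcal T} V^\star_{\mathrm{ac}} - V^\star_{\mathrm{ac}}\|_\infty \le \omega_{\tdelay}.
\]
Since $\tilde{\mathcal T}$ is a $\gamma^{k-\tdelay}$-contraction with fixed point $\tilde V^\star$, the standard argument yields
\[
\|\tilde V^\star - V^\star_{\mathrm{ac}}\|_\infty \le \frac{\|\tilde{\mathcal T} V^\star_{\mathrm{ac}} - V^\star_{\mathrm{ac}}\|_\infty}{1-\gamma^{k-\tdelay}} \le \frac{\omega_{\tdelay}}{1-\gamma^{k-\tdelay}}.
\]

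To finish, note that the optimal no-delay value from the initial state equals $\max_{a_{1:1+\tdelay}} V^\star_{\mathrm{ac}}(s_1,a_{1:1+\tdelay})$. The learned policy also selects its first $\tdelay$ actions optimally, since at the start there is no delay. Taking expectation over $s_1\sim\Pinit$ gives the claim.

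I expect the main obstacle to be bookkeeping rather than the contraction itself. First, I need to verify that $V^\star_{\mathrm{ac}}(s,a_{0:\tdelay})$, viewed as a function of the augmented state, is the right comparison object, so that the no-delay optimal value is recovered exactly at the first step. Second, I need to check that the reward and discount accounting of term (b) matches the lifted transition exactly; in particular, the lifted reward must cover the $k-\tdelay$ steps actually executed between decisions, with the committed prefix's rewards attributed consistently. If the first-chunk boundary does not line up, I would absorb the initial step into one extra application of the $\omega_{\tdelay}$ bound, which does not change the $\frac{1}{1-\gamma^{k-\tdelay}}$ order.
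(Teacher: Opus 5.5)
Your argument is correct, but you formalize ``standard $Q$-learning'' differently from the paper, so you analyze a different learned policy.

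\textbf{What the paper does.} Both critics take the same input, $(s_t, a_{t:t+k})$, i.e.\ (augmented state, completion); they differ in the backup. The paper uses the action-chunking $Q$-learning of Li et al., whose $k$-step backup with a max over full chunks learns the \emph{non-delayed} $Q^\star_{\mathrm{ac}}$. It argues as follows:
\begin{itemize}
\item Data collected by $d$-delayed policies is open-loop consistent.
\item Li et al.'s Theorem 1 plus two support assumptions then give $\hat Q^+_{\mathrm{ac}} = Q^\star_{\mathrm{ac}}$ on $\mathrm{supp}(P_{\mathcal{D}})$.
\item The deployed policy completes each committed prefix greedily with respect to $Q^\star_{\mathrm{ac}}$.
\item This fixed policy is bounded by a policy-evaluation recursion. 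At the greedy completion $Q^\star_{\mathrm{ac}}$ equals $V^\star_{\mathrm{ac}}$, so the one-step backup is exactly term (b), giving $|Q^+_d - Q^\star_{\mathrm{ac}}| \le \omega_d + \gamma^{k-d}\,\mathbb{E}|Q^+_d - Q^\star_{\mathrm{ac}}|$.
\end{itemize}

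\textbf{What you do.} You back up $k-\tdelay$ steps to $(s_{t+k-\tdelay}, a_{t+k-\tdelay:t+k})$ with a max over completions only. Your $\widehat{\pi}^{\tdelay}$ is therefore optimal for the lifted delayed MDP. You bound $\tilde V^\star$ through the optimal operator $\tilde{\mathcal T}$ and the nonexpansiveness of the max.

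\textbf{How they relate.} The two are the control and evaluation versions of one contraction step. Replacing $\tilde{\mathcal T}$ by the evaluation operator of the $Q^\star_{\mathrm{ac}}$-greedy completion policy essentially reproduces the paper's proof. Since $\tilde V^\star \ge V^+_d$ pointwise, the paper's bound implies your one-sided bound on the data support, but not conversely.

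\textbf{What each route buys.} The paper's statement is in that sense stronger: even acting greedily on a non-delayed chunked critic, learned offline from delayed-policy data under explicit coverage conditions, is already near-optimal. Your route gives:
\begin{itemize}
\item a self-contained argument with no support or open-loop-consistency machinery;
\item a sup-norm bound over all augmented states;
\item explicit handling of the first chunk, which the paper's theorem (stated for a given prefix) leaves implicit. You need the first prefix to be freely chosen, and then maximizing either $\tilde V^\star(s_1,\cdot)$ or $V^\star_{\mathrm{ac}}(s_1,\cdot)$ works;
\item a learner whose augmented-state TD backup is closer to how the \method critic is actually trained.
\end{itemize}
The price is the idealization that $Q$-learning reaches $\tilde Q^\star$ everywhere.
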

Proposition \ref{prop:delay_subopt} shows that, as long as observation delays do not significantly impact the performance of the \emph{oracle} action chunking policy, we can apply standard $Q$-learning approaches (in particular, the action chunking $Q$-learning approach of \cite{li2025decoupled}) to learn a policy that will converge to approximately the loss in performance the oracle policy incurs. 
As \method adopts a $Q$-learning approach analogous to \cite{li2025decoupled}, this shows that, while \method may incur some suboptimality from delayed observations, this suboptimality can be effectively bounded. Furthermore, by mitigating the delay with intermediate states, \method can reduce the delayed oracle optimality gap, leading to more effective performance than can be achieved without this.
We present the full version of Proposition \ref{prop:delay_subopt} in \Cref{appdix:theory}.

%% file: body/experiments.tex
\newcommand{\alohacube}{\texttt{AlohaTransferCube}\xspace}

\section{Experimental Results}\label{sec:experiments}
We next test whether \arli enables effective improvement of generalist policies in practice. We aim to evaluate whether standard RL approaches enable effective improvement under asynchronous VLA inference and whether \arli is able to improve on these approaches, leading to both faster learning and higher final success rate. We evaluate \arli on several simulated tasks that require high reactivity, as well as on three real-world tasks.
\begin{figure}
    \centering
    \includegraphics[width=0.97\textwidth]{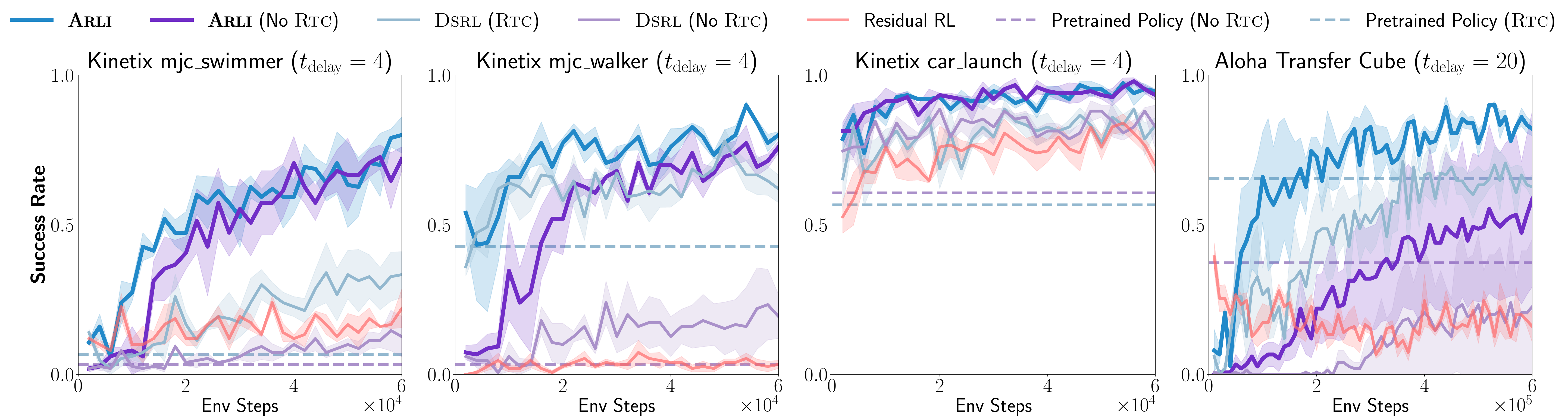}
    \caption{Comparison of \arli and \dsrl on simulation tasks with and without \rtc.}
    \label{fig:sim_baseline_results}
    \vspace{-1em}
\end{figure}

\newcommand{\aresid}{a^{\Delta}}
\textbf{Baseline methods.}
We compare \arli to several other approaches that integrate RL into asynchronous VLA inference. First, we consider the most straightforward application of asynchronous \dsrl, where we condition the RL policy on $s_{t-\tdelay}$. We then consider variants of \dsrl with all possible state augmentations introduced in \Cref{sec:method}, including intermediate actions, intermediate states, \rtc, and different combinations thereof. For our main simulation results, we consider the direct application of \dsrl with and without \rtc 
and \arli with and without \rtc; we leave other comparisons for the ablations (\Cref{sec:ablations}). For real-world experiments, we compare against a synchronous inference approach which simply pauses to compute each new action.
We run \arli as described in \Cref{sec:method}.

In addition to variants of \dsrl, we also compare \arli to residual RL \citep{johannink2018residual,ankile2025imitation, yuan2024policy,ankile2025residual}. Residual RL operates by training an RL policy $\pirl$ to output a \emph{residual correction} $\aresid_t$, and then executing the action $a_t + \aresid_t$, for $a_t$ the action produced by the base policy. Unlike \dsrl which must be incorporated into the inference process of the base policy, residual RL can be run independently of this inference. As $\pirl$ is often parameterized as a small MLP that has minimal inference cost, we can compute a fresh residual correction at each step based on the current state, effectively mitigating the effect of the inference delay. For our experiments, we utilize the instantiation of residual RL proposed in \cite{ankile2025residual}, which conditions $\pirl$ on $s_t$ and $a_t$, and computes a residual correction at each step. 

 For all simulation results, we average results across 3 seeds and report error bars denoting 1 standard error. Please see the Appendix for additional experimental details.

\subsection{Simulated Experiments}\label{sec:sim_experiments}
We first evaluate \arli on several simulated tasks.
We consider the \textbf{Kinetix} benchmark \cite{black2026real}, which contains dynamic environments, requiring high reactivity. For our pretrained policy $\pipt$ we utilize the publicly available flow policy checkpoints considered in \cite{black2026real}.
For our comparison, we select the \texttt{mjc\_swimmer}, \texttt{mjc\_walker}, and \texttt{car\_launch} tasks, which have lower pretrained policy success rates, and show significant degradation in zero-shot performance over increasing inference latencies. 
For each task, we set $\tdelay = 4$ and $\tdelayrl = 1$. 
We also consider the \textbf{AlohaTransferCube} task \cite{zhao2023learningfinegrainedbimanualmanipulation}, which simulates VLA inference with larger latencies and tests \arli's ability to scale to more challenging bimanual manipulation tasks. For $\pipt$ we utilize the publicly available Aloha finetune of $\pi_0$ \cite{black2024pi_0}, a 3.3B-parameter VLA. We set $\tdelay = 20$ and $\tdelayrl = 10$.

Figure \ref{fig:sim_baseline_results} plots policy success rate over time for the 4 simulated tasks, and shows that \arli significantly outperforms naive application of \dsrl under asynchronous inference, converging to both a higher final success rate and requiring less training time. While running \dsrl with \rtc does improve the finetuning performance on some tasks, we find that this is insufficient to fully learn the desired behavior, and that the state augmentations incorporated in \arli are required. Furthermore, we see that, while in many cases \arli is able to perform effectively without \rtc, \rtc does improve the efficiency and reliability of \arli. We see as well that \arli enables significant improvements over residual RL---despite the higher reactivity residual RL enables, incorporating \dsrl's ability to steer the denoising process enables significantly higher final success once we incorporate \arli's state augmentations.
These results illustrate that \arli is able to effectively handle the asynchronous inference delays, and still enable highly effective finetuning, both in high reactivity settings, as well as with large-scale VLAs.

\subsection{Real-World Experiments}

\begin{figure}
    \centering
    \includegraphics[width=1\linewidth]{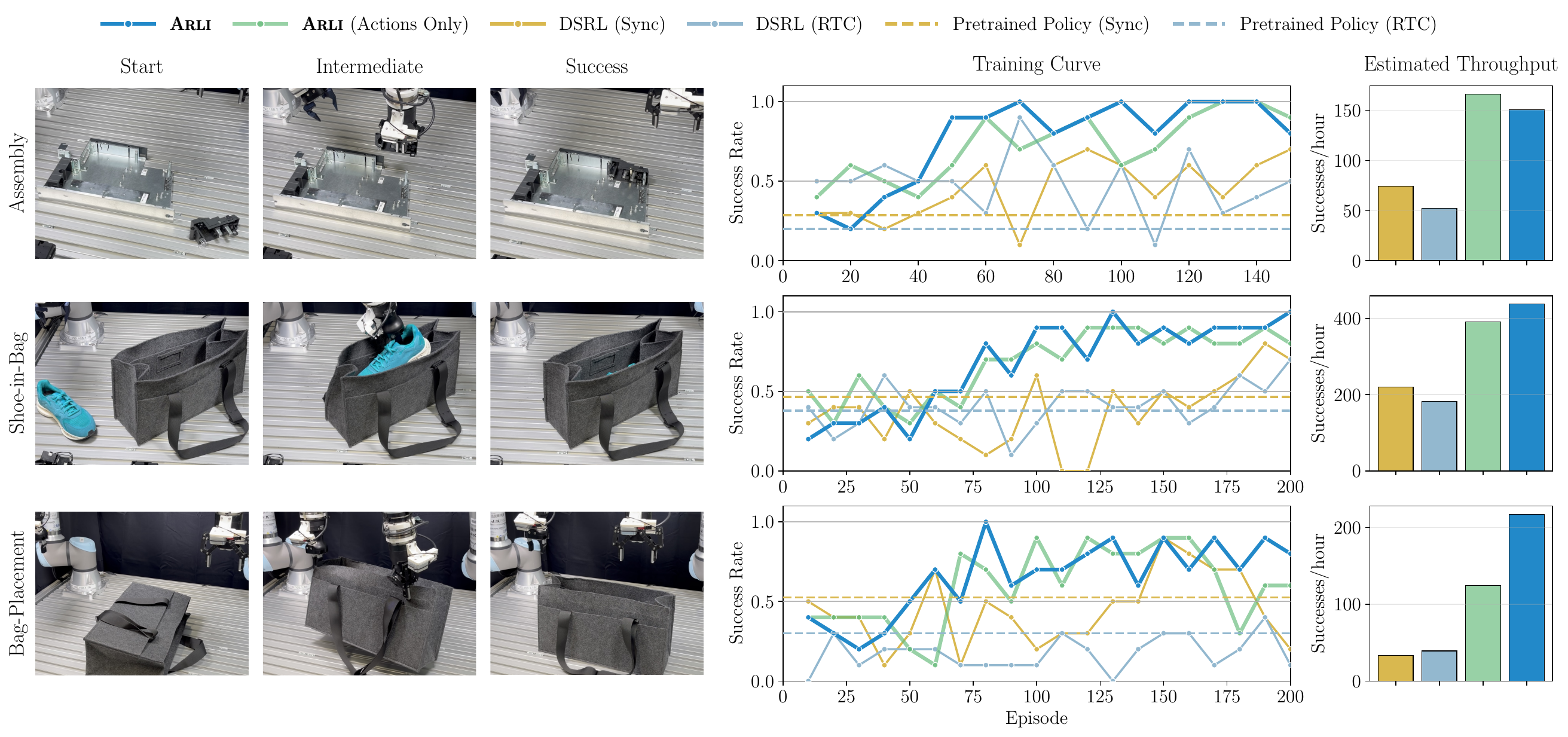}
    \vspace{-1.5em}
    \caption{
    Comparison of \arli against other RL finetuning methods on real-world tasks. \textit{Left}: task images. \textit{Middle}: training success curves corresponding to average success rate in the last 10 episodes. \textit{Right}: estimated throughput in successes/hour computed based on the duration of each success in the last 20 episodes
    and assuming failure is declared after 2x average success duration. 
    }
    \label{fig:Opt1_real_world_results}
    \vspace{-1em}
\end{figure}

We next test whether \arli allows for effective RL finetuning in real-world robotic deployment. We consider the following real-world tasks (see \Cref{fig:Opt1_real_world_results} for visualization):
\begin{itemize}[nosep,leftmargin=*]
    \item \textit{Assembly}. Insert a power connector into an inverter box by aligning it with two pins.
    \item \textit{Shoe-in-Bag}. Pick up a shoe and place it into a narrow bag. %
    \item \textit{Bag-Placement}. With both arms, pick up a lying bag and position it parallel to the front. 
\end{itemize}
 \suppdetail{For all tasks, we start each episode from the same initial condition to make the results comparable (with the exception of the handles of the bag during \textit{Bag-Placement}).} 
 We run all tasks on a bimanual UR5e robot cell. Each robot is equipped with a wrist camera and a base camera is installed at the front of the cell. For all experiments, we initialize the base policy with $\pi_{0.5}$ \cite{intelligence2025pi_} finetuned on a small dataset of human demonstrations on the target task at 60hz and action chunk length 50. On an NVIDIA GeForce RTX 5090 GPU, we find $\tdelay=10$, $\tdelayrl=7$, and action horizon $n=20$ to be optimal for both sync and RTC policies. Note that with RTC, inference time needed for $\piptae$ increases, and we find that $\piptvlm$ inference takes up only ~40\% of the total inference time.

Figure~\ref{fig:Opt1_real_world_results} shows the results of training on these three tasks. Starting from a base policy success rate of around 40\%, \arli is capable of achieving near 100\% success on all tasks after 100 to 125 episodes of training. This is a significant improvement over \dsrl (synchronous) and \dsrl (\rtc), which struggle to reach 80\% in the same amount of time on the \textit{Assembly} and \textit{Bag-Placement} tasks, and take nearly double the time to match \arli in the \textit{Shoe-in-Bag} task. Interestingly, \arli (Actions Only) performs comparably to \arli with the exception of the \textit{Bag-Placement} task, where it fails to converge to a high success rate.
We highlight that without asynchronous inference, both the base policy exhibits poor success and RL training is significantly slower, or does not learn at all. In other words, asynchronous inference is required to learn on these tasks, and \arli is capable of facilitating RL finetuning with asynchronous inference, allowing for online VLA improvement in settings where this would previously have not been possible.
We also estimate policy throughput in successes/hour and find that \arli shows significantly higher throughput across all tasks.

\subsection{Ablations}\label{sec:ablations}

\begin{figure*}
    \centering
    \begin{minipage}[t]{0.49\textwidth}
        \centering

        \includegraphics[width=0.95\textwidth,height=1.45in]{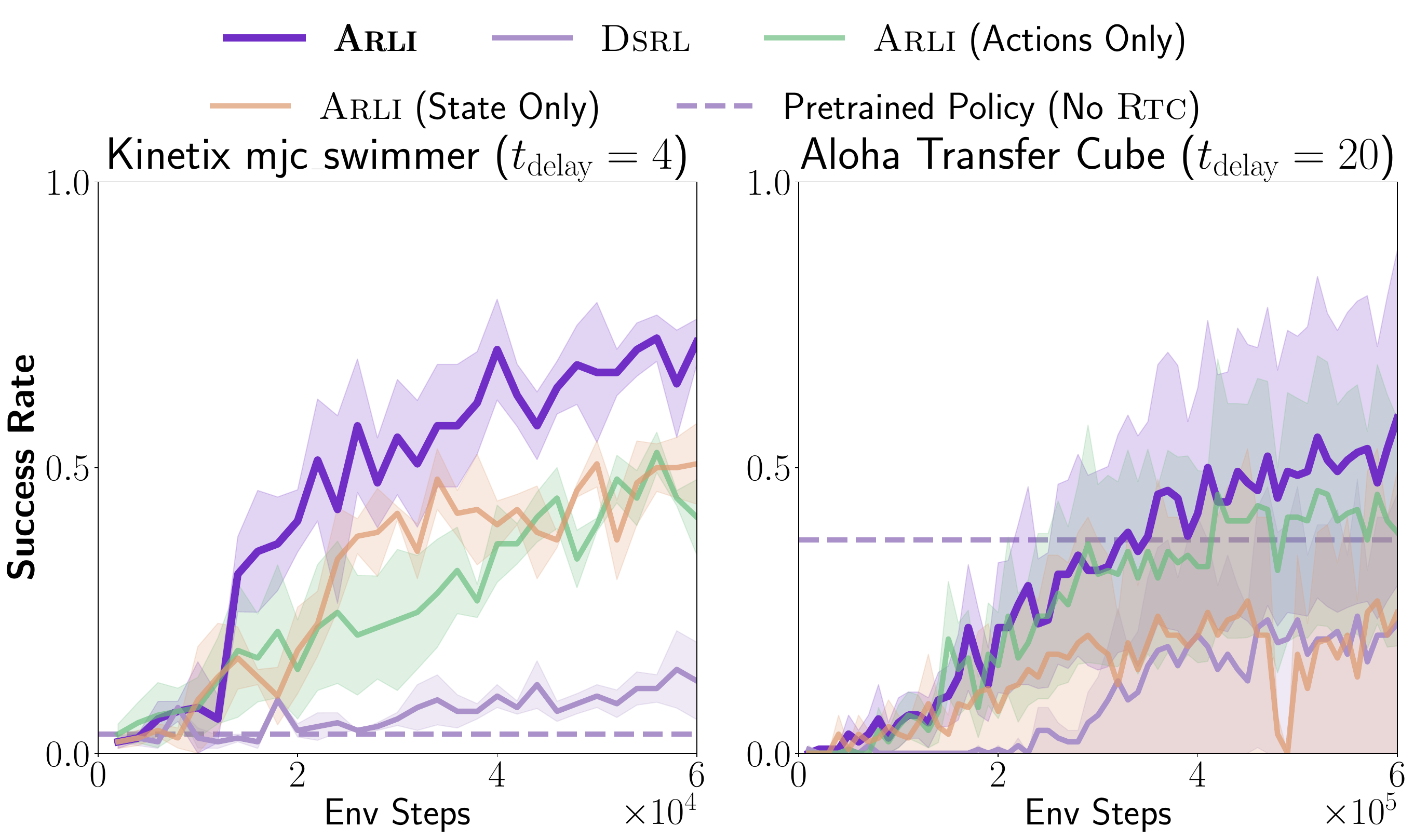}
        \caption{Variants of \arli (no \rtc) with one of intermediate state or actions on \texttt{mjc\_swimmer} and \alohacube.}
        \label{fig:intermediates_ablation}
    \end{minipage}%
    ~ 
    \begin{minipage}[t]{0.49\textwidth}
        \centering
        \includegraphics[width=0.95\textwidth]{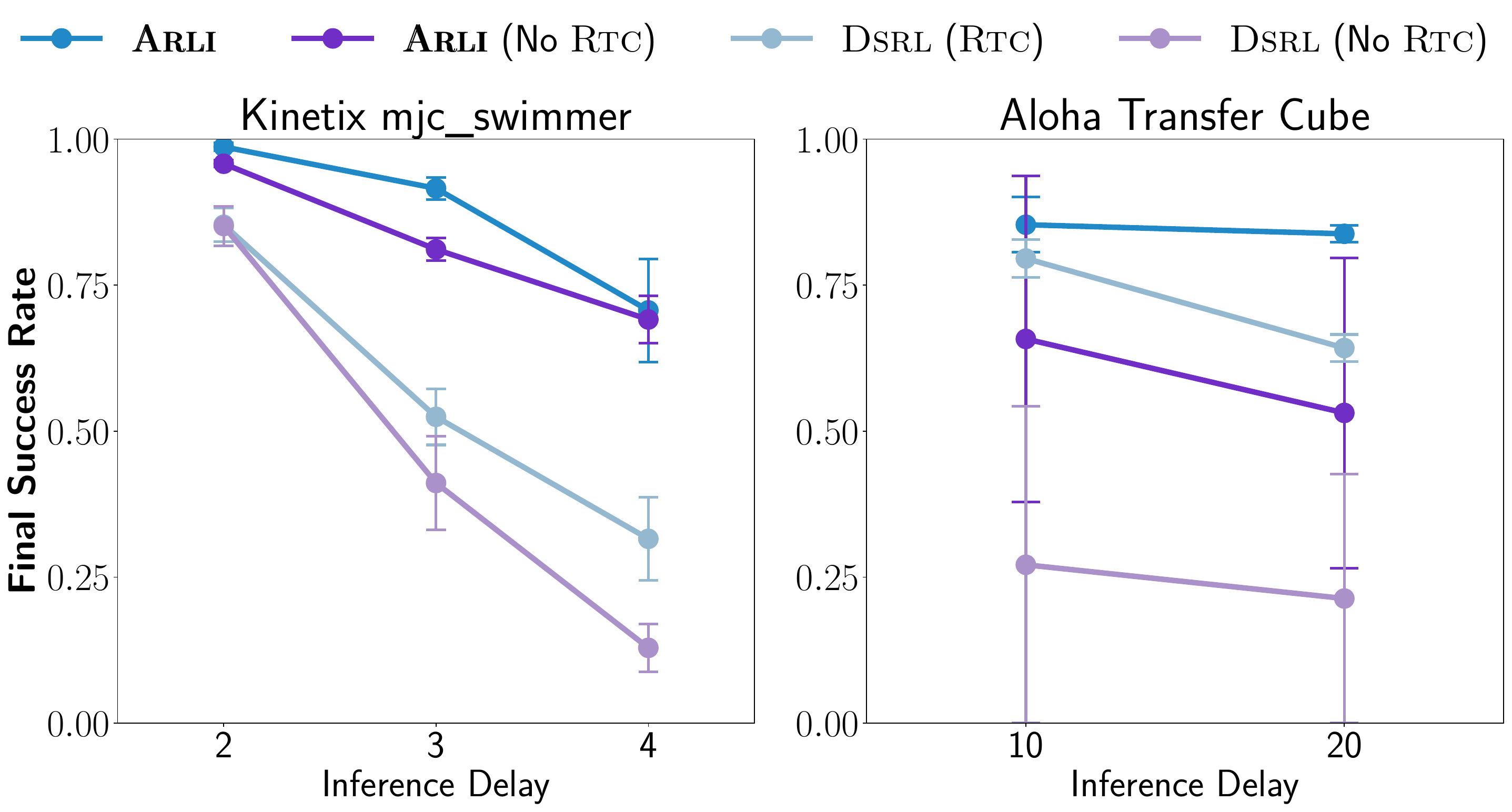}
        \caption{Comparison of \arli and \dsrl across inference delays on \texttt{mjc\_swimmer} and \alohacube.}
        \label{fig:delay_ablation}
    \end{minipage}
    \vspace{-1em}
\end{figure*}

We next investigate the efficacy of our design choices through a series of ablation studies.

\textbf{How critical is it that we condition $\pirl$ on the intermediate state and actions?} We evaluate the importance of including intermediate state and actions by running the experiment of Section \ref{sec:sim_experiments} on the \texttt{mjc\_swimmer} and \alohacube tasks, isolating the usage of the intermediate state and actions. We run these experiments \textit{without} \rtc to minimize the temporal information utilized by default. Figure \ref{fig:intermediates_ablation} shows that conditioning $\pirl$ on both intermediate states and actions yields significantly better results than conditioning on one of the two, highlighting the importance of our choice of $\srl_t$.\loose

\textbf{How sensitive is \arli to $\tdelay$?} We evaluate the sensitivity of \arli to $\tdelay$ by running the same experiment in Section \ref{sec:sim_experiments} for the \texttt{mjc\_swimmer} and \alohacube tasks, setting $\tdelay=[2,3,4]$ for Kinetix and $\tdelay=[10,20]$ for \alohacube. We accordingly adjust the length of the intermediate actions to $\tdelay$ and set $\tdelayrl=\tdelay-1$ for Kinetix and $\tdelayrl=\tdelay/2$ for \alohacube.
See Figure \ref{fig:delay_ablation} for plots of final success rate of \arli against the baselines. Across both tasks, we see that with \rtc the performance of \arli decays slower than \dsrl as $\tdelay$ increases, indicating that \arli is more robust to inference delay.

\begin{wrapfigure}[10]{r}{0.4\textwidth}
\centering
\vspace{-1.5em}
\includegraphics[width=0.9\linewidth]{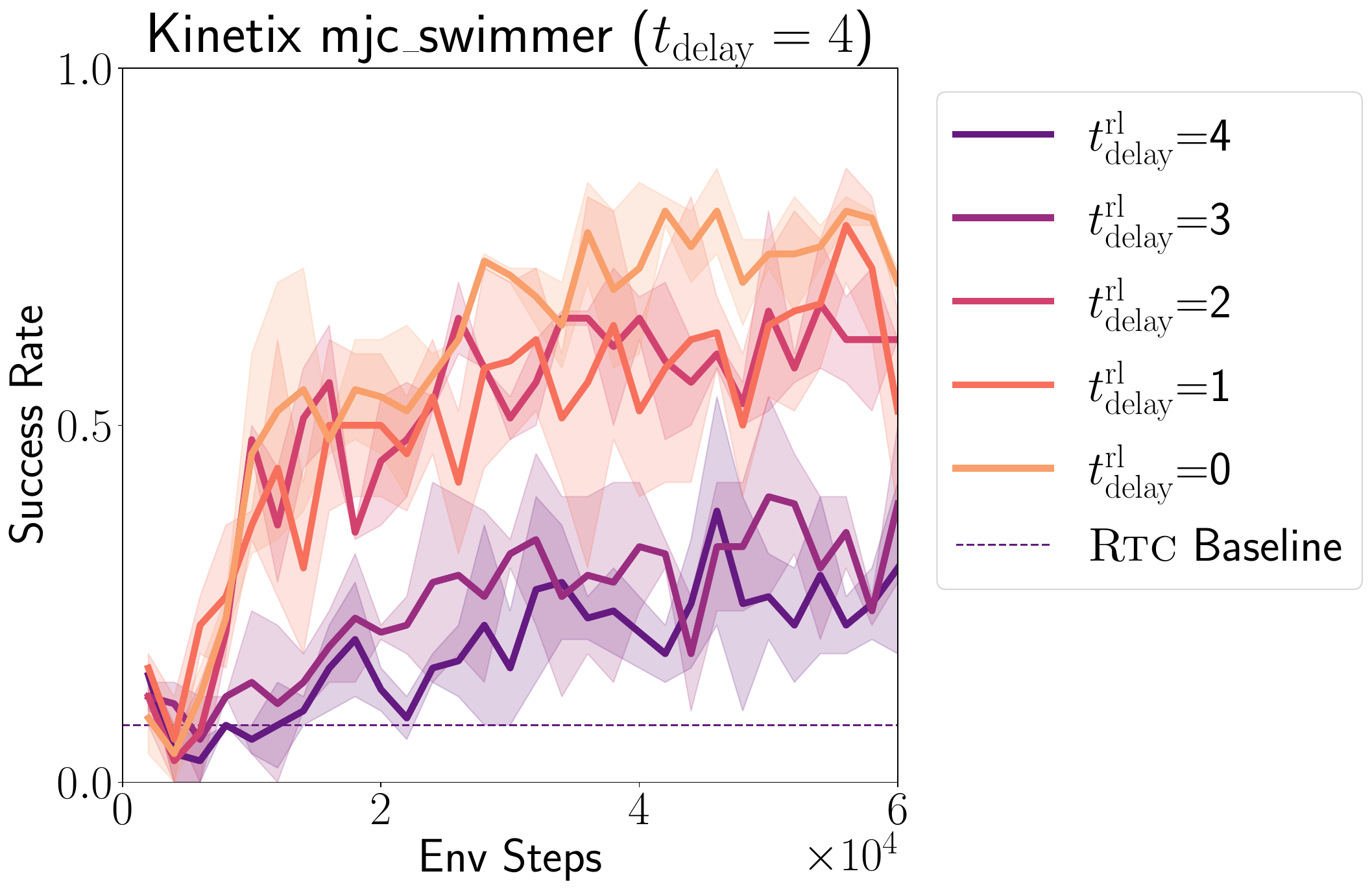} 
\caption{\arli performance at varying RL inference delays on \texttt{mjc\_swimmer}.}
\label{fig:mid_obs_figure}
\end{wrapfigure}

\textbf{How sensitive is \arli to $\tdelayrl$?} We evaluate the sensitivity of \arli against RL inference latency by running the same experiment in Section \ref{sec:sim_experiments} for the \texttt{mjc\_swimmer} task, setting $\tdelay=4$ and ablating $\tdelayrl$ from 0 (most up-to-date) to 4 (no additional information). For these experiments we run \arli with \rtc. See Figure \ref{fig:mid_obs_figure} for results, which show that having more up-to-date information is better, and that when $\tdelayrl$ is greater than half of $\tdelay$, learning significantly improves.

%% file: body/limitations.tex
\section{Conclusion and Limitations}
\vspace{-0.2cm}

In this work, we have proposed \arli, an approach to enable effective RL improvement of generalist robot policies under practical inference constraints. In particular, we have shown that, by properly augmenting the observations the RL policy is conditioned on, we can enable effective RL improvement with asynchronous policy inference, enabling learning while mitigating the effects of latency.\loose

While \arli has shown effective performance on our simulated and real-world tasks, it does possess a number of limitations. A key assumption we make for our pretrained policy is that it can be split up into VLM backbone and action expert components, where the action expert is the only part of the policy that requires a noise input. While this is true for many popular VLAs, for policies that do not satisfy this constraint, we cannot utilize intermediate states to improve performance (though \arli can still take advantage of intermediate actions in this case, as our experimental results demonstrate). 
While introducing \rtc to our method shows performance gains, utilizing \rtc might in fact hurt the reactivity of our policy, as it forces old actions to be played, making the RL policy less expressive. 
Lastly, while \dsrl is capable of learning in many tasks, previous works have observed that \dsrl can struggle to learn on certain tasks. Given \arli's reliance on \dsrl, \arli will likely also struggle to learn on such tasks.

%% file: body/appendix.tex
\newcommand{\dsrlsac}{\textsc{Dsrl-Sac}\xspace}
\newcommand{\single}{\mathrm{single}}
\newcommand{\sac}{\textsc{Sac}\xspace}

\section*{Contributions}

\begin{itemize}[leftmargin=*]
\item Brian Zhu: Methodology (contribution 2: adding intermediate state), DSRL + RTC repro, Aloha simulation, software architecture \& implementation, writing
\item Momen Khalil: Methodology (contribution 1: adding intermediate actions), DSRL + RTC repro, Kinetix simulation, software architecture \& implementation, writing 
\item E Harrison: Simulated experiments, writing
\item Emanuele Poggi: Simulation and real-world experiments, ran learning dynamics analysis, writing 
\item Philipp Schmitt: original idea, software architecture, advice
\item Bernd Kast: Software architecture \& implementation, advice
\item Philine Meister: Real-world experiment lead, writing
\item Pranav Atreya: Experimental support, writing
\item Qiyang Li: Theoretical results
\item Finn Ferchau: Real-world experiment execution
\item Cesar Colmenero: Real-world experiment execution
\item Yash Shahapurkar: Early real-world experiment testing
\item Gokul Narayanan: Early real-world experiment testing
\item Melih Erdogan: Early real-world experiment testing
\item Kai Wurm: Advice, resources, funding
\item Georg von Wichert: Advice, resources, funding
\item Oier Mees: Advising, writing
\item Eugen Solowjow: Advice, resources, funding
\item Andrew Wagenmaker: Advising, method ideation, writing
\item Sergey Levine: Advising, writing
\end{itemize}

\section{Experimental Details}

\subsection{Implementation}

\textbf{Algorithm}. For all experiments, we run \arli and \dsrl using \dsrlsac with repeated latent actions as described in the $\pi_0$ experiments from \cite{wagenmaker2025steering}. In particular, we train a single step actor $\pirl^\single$ and critic $Q^\single(s,w_\single)$ where $\mathcal{W}_\single = \mathbb{R}^d$; during inference we sample $w_\single \sim \pirl^\single(s)$ and repeat it across the action chunk axis to query the flow policy, i.e. $w = \{(w_1, \ldots, w_k) | w_i = w_\single\}$ where $k$ is the chunk length of the flow policy.
Similar to \cite{wagenmaker2025steering}, we handle RL with action chunking by treating the sampling and execution of an action chunk as a single step in the latent MDP, ignoring observations from within the chunk. Note that the number of environment steps stated in all results corresponds to the total number of steps from the original environment rather than the latent MDP.

For residual RL experiments, we use the implementation described in \cite{ankile2025residual}. Specifically, we train residual policy $\pirl$ to output a residual correction $\aresid_t$, which is used to edit the base policy's action and execute $\tilde{a}_t = a_t + \aresid_t$. The resulting action $\tilde{a}_t$ is also used to train the critic function $Q(s_t, \tilde{a}_t)$. The residual RL policy can be run independently of the base policy's inference process, and due to its lightweight size, can essentially be run without inference delay. So, we provide $\pirl$ with the most up-to-date RL state, $\srl_t \leftarrow s_{t}$. Unlike \dsrl which necessarily stays within the support of the base policy, residual edits that have a similar magnitude to the base policy actions can lead to the policy relearning the task from scratch, rather than finetuning the base policy. To ensure that this does not occur, we limit the search of the optimal action magnitude for residual edits ($b_w$) within [0, 0.1].

\textbf{Observation Space and Architecture}. For Kinetix tasks we use the "symbolic" observation space, which according to \cite{matthews2025kinetixinvestigatingtraininggeneral} is a flat vector that concatenates the physical properties all the entities in the scene (e.g., position, rotation, velocity). The actor and critics are thus defined as an MLP that processes this observation vector. For \alohacube and real-world tasks we use the raw images and joint state as the observation space. For both actor and critics, the raw images are processed by a vision encoder consisting of 4 CNN layers before being flattened and concatenated with the joint state. The final state vector is then processed by an MLP. Table \ref{tab:obs_space_and_arch} summarizes the observation space and architecture choices for each task. Note that for real-world tasks, we use 3 camera streams (1 base camera along with 1 wrist camera for each arm); the images are concatenated along the channel dimension before being processed by the vision encoder.

\begin{table*}[h] 
    \caption{Observation space and architecture used for each environment}
    \label{tab:obs_space_and_arch}
    \centering
    \begin{tabular}{l c c c}
    \toprule
    \textbf{Hyperparameter} & \textbf{Kinetix} & \textbf{AlohaTransferCube} & \textbf{Real-World} \\ 
    \midrule
    Observation space & Symbolic & Pixels \& State & Pixels \& State \\
    Image observation size & N/A & 64 $\times$ 64 & 64 $\times$ 64 \\
    Actor and critic architecture & MLP & CNN+MLP & CNN+MLP \\
    Number of actor and critic MLP layers & 3 & 3 & 3\\
    MLP hidden size & 256 & 128 & 128 \\ 
    CNN features & N/A & $(32, 32, 32, 32)$ & $(32, 32, 32, 32)$ \\
    Number of images & N/A & 1 (\texttt{cam\_high}) & 3 (base \& wrists) \\
    \bottomrule
    \end{tabular}
\end{table*}

\textbf{Intermediate Information Processing}. To handle the intermediate state introduced by \arli, we  concatenate the intermediate state with the original state. For Kinetix, we concatenate the observation vectors. For \alohacube and real-world tasks, we concatenate images along the channel dimension and separately concatenate the joint states. To handle the intermediate actions introduced by \arli, we flatten the actions and concatenate them to the full state vector before it is processed by the MLP layers.

\textbf{Code}. We use the \texttt{dsrl\_pi0} codebase from \cite{wagenmaker2025steering} to implement the \dsrlsac training loop. We modify \texttt{dsrl\_pi0} and \texttt{openpi} \cite{openpi} to integrate \rtc based on the codebase from \cite{black2026real}. We also integrate the Kinetix tasks from \cite{black2026real} with \dsrl. For \alohacube, we do not make additional modifications.

\textbf{Base Policy}. For Kinetix, we use the BC flow policy checkpoints from \cite{black2026real} for each task. In particular, we use the checkpoint for the 5th epoch as the base policy initialization. For \alohacube, we follow \cite{wagenmaker2025steering} and use the $\pi_0$ checkpoint from \texttt{s3://openpi-assets/checkpoints/pi0\_aloha\_sim}. For real-world experiments, we finetune $\pi_{0.5}$ \cite{intelligence2025pi_} with LoRA on each task using a cosine decay schedule. Training data was recorded for the specific tasks at 60Hz using a Meta Quest to remote control the robots. Note that the training data for the Shoe-in-Bag task does not only focus on packing the shoe, but also contains demonstrations for packing other clothing items as t-shirts into the bag. Dataset details and training hyperparameters for finetuning $\pi_{0.5}$ are listed in Table \ref{tab:real_world_base_policy}. 

\begin{table*}[h] 
    \caption{Real-world experiments base policy training data details}
    \label{tab:real_world_base_policy}
    \centering
    \begin{tabular}{l c c c}
    \toprule
    \textbf{} & \textbf{Assembly} & \textbf{Shoe-in-Bag} & \textbf{Bag-Placement} \\ 
    \midrule
    Number of episodes & 60 & 123 & 51 \\
    Total dataset steps & 84821 & 474301 & 50041 \\
    Batch size & 32 & 32 & 32 \\
    Approx. number of epochs & 3 & 2 & 4 \\
    Warm-up steps & 265 & 1000 & 1000\\
    Decay steps & 8000 & 30000 & 30000\\
    Training steps & 8000 & 30000 & 6000\\
    Peak-LR & 2.5e-5 & 2.5e-5 & 2.5e-5\\
    Decay-LR & 2.5e-6 & 2.5e-6 & 2.5e-6\\
    \bottomrule
    \end{tabular}
\end{table*}

\subsection{Hyperparameters}

\dsrl-specific hyperparameters are listed in Table \ref{tab:dsrl_hyperparameters}. Simulation evaluation metrics are averaged over 50 rollouts. For all experiments with \rtc, we use the hyperparameters listed in Table \ref{tab:rtc_hyperparameters}. Values of inference delay we ablate over are shown as a list. Residual RL hyperparameters are listed in Table \ref{tab:residual_hyperparameters}.

We initialize all experiments with offline data collection (that is, rolling out the base policy some number of episodes) and optionally offline training (that is, training the actor and critic with \sac on the offline data for some number of steps). Hyperparameters are listed in Table \ref{tab:offline_hyperparameters}. Note that number of offline transitions refer to transitions from the latent MDP rather than the original environment. For \alohacube, we found that scaling down the variance of the noise distribution boosted the number of successful trajectories in the replay buffer. Similarly, for real-world experiments we found that clipping the sampled noise $w_\single$ reduces unsafe motions from the robot.

\begin{table*}[h] 
    \centering
    \caption{Hyperparameters used for \dsrl}
    \label{tab:dsrl_hyperparameters}
    \begin{tabular}{l c c c}
    \toprule
    \textbf{Hyperparameter} & \textbf{Kinetix} & \textbf{AlohaTransferCube} & \textbf{Real-World} \\ 
    \midrule
    Batch size & 256 & 256 & 256 \\ 
    Actor learning rate & 0.0001 & 0.0001 & 0.0001\\
    Critic learning rate & 0.0003 & 0.0003 & 0.0003 \\
    Temperature learning rate & 0.0003 & 0.0003 & 0.0003 \\
    Activation & ReLU & ReLU & ReLU \\
    Discount factor & 0.999 & 0.999 & 0.999 \\
    Target entropy & $-d/2$ & $-d/2$ & $-d/2$ \\
    Target update rate ($\tau$) & 0.005 & 0.005 & 0.005 \\
    Gradient steps per update & 20 & 25 & 20 \\
    Action magnitude ($b_w$) & 1.0 & 2.0 & 0.5\\
    Action chunk size & 8 & 50 & 50 \\
    Number of actions to execute & 4 & 25 & 20\\
    Number of critics & 10 & 10 & 10 \\
    Clipped Q-learning & False & False & False\\
    \bottomrule
    \end{tabular}
\end{table*}

\begin{table*}[h] 
    \caption{Hyperparameters used for \rtc}
    \label{tab:rtc_hyperparameters}
    \centering
    \begin{tabular}{l c c c}
    \toprule
    \textbf{Hyperparameter} & \textbf{Kinetix} & \textbf{AlohaTransferCube} & \textbf{Real-World} \\ 
    \midrule
    Denoising steps & 5 & 10 & 10 \\
    Inference Delay ($\tdelay$) & $[1, 2, 3, 4]$ & $[10, 20]$ & 10 \\
    Soft Masking & True & True & True \\
    Max. Guidance Weight & 5 & 10 & 10\\
    \bottomrule
    \end{tabular}
\end{table*}

\begin{table*}[h] 
    \centering
    \caption{Hyperparameters used for Residual RL}
    \label{tab:residual_hyperparameters}
    \begin{tabular}{l c c c}
    \toprule
    \textbf{Hyperparameter} & \textbf{Kinetix} & \textbf{AlohaTransferCube} \\ 
    \midrule
    Batch size & 256 & 256 \\ 
    Actor learning rate & 0.000001 & 0.000001 \\
    Critic learning rate & 0.0001 & 0.0001 &  \\
    Activation & ReLU & ReLU  \\
    Discount factor & 0.99 & 0.99  \\
    Target update rate ($\tau$) & 0.005 & 0.005  \\
    Gradient steps per update & 4 & 4  \\
    Action magnitude ($b_w$) & 0.1 & 0.1 \\
    Action chunk size & 8 & 50 \\
    Number of actions to execute & 4 & 25 \\
    Number of critics & 10 & 10  \\
    Clipped Q-learning & False & False \\
    \bottomrule
    \end{tabular}
\end{table*}

\begin{table*}[h] 
    \caption{Hyperparameters used for initial offline data collection}
    \label{tab:offline_hyperparameters}
    \centering
    \begin{tabular}{l c c c}
    \toprule
    \textbf{Hyperparameter} & \textbf{Kinetix} & \textbf{AlohaTransferCube} & \textbf{Real-World} \\ 
    \midrule
    Number of offline transitions & 500 & 1000 & 1200 \\
    Offline training steps & 0 & 0 & 24000 \\
    Offline noise distribution & $\mathcal{N}(0, 1)$ & $\mathcal{N}(0, 0.3)$ & $\mathcal{N}(0, 1)$ \\
    Clip noise & False & False & True (0.5) \\
    \bottomrule
    \end{tabular}
\end{table*}

\clearpage

\section{Additional Ablations}

\begin{figure*}
    \centering
    \includegraphics[width=0.97\textwidth]{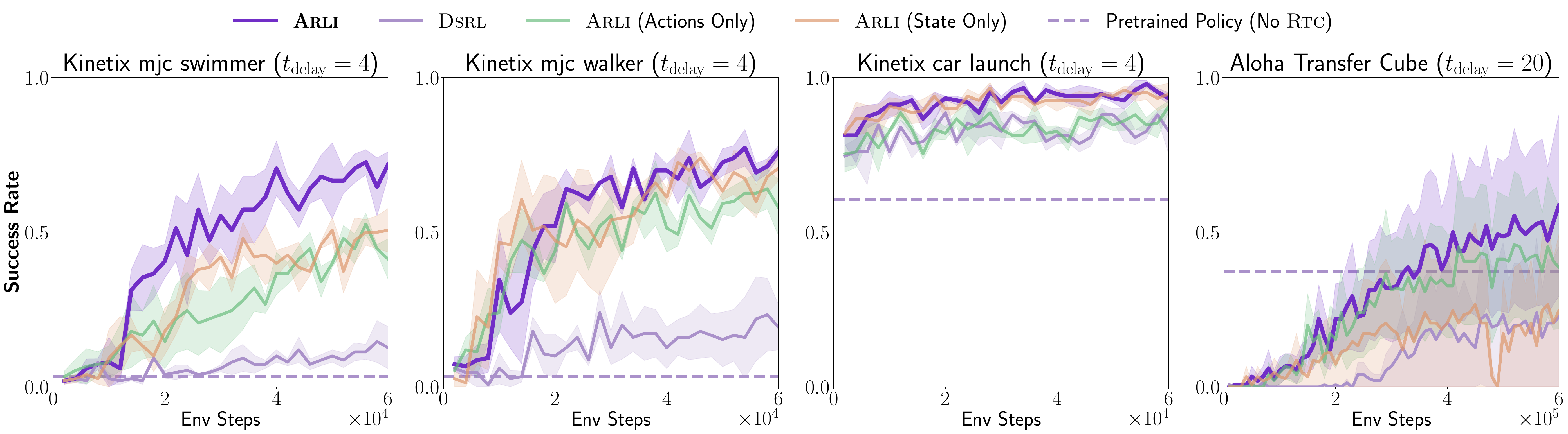}
    \caption{Variants of \arli (no \rtc) with one of intermediate state or actions on \texttt{mjc\_swimmer}, \texttt{mjc\_walker}, \texttt{car\_launch} and \alohacube.}
    \label{fig:state_repr_naive_all}
\end{figure*}

\begin{figure*}
    \centering

    \includegraphics[width=0.97\textwidth]{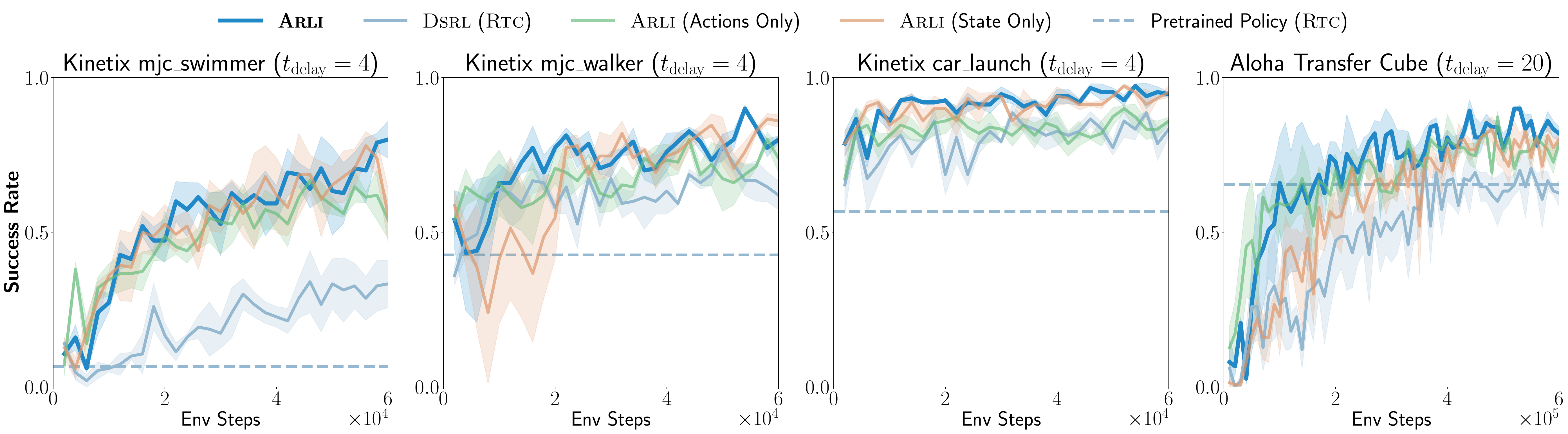}
    \caption{Variants of \arli (with \rtc) with one of intermediate state or actions on \texttt{mjc\_swimmer}, \texttt{mjc\_walker}, \texttt{car\_launch} and \alohacube.}
    \label{fig:state_repr_rtc_all}
\end{figure*}

\begin{figure*}
    \centering
    \includegraphics[width=0.97\textwidth]{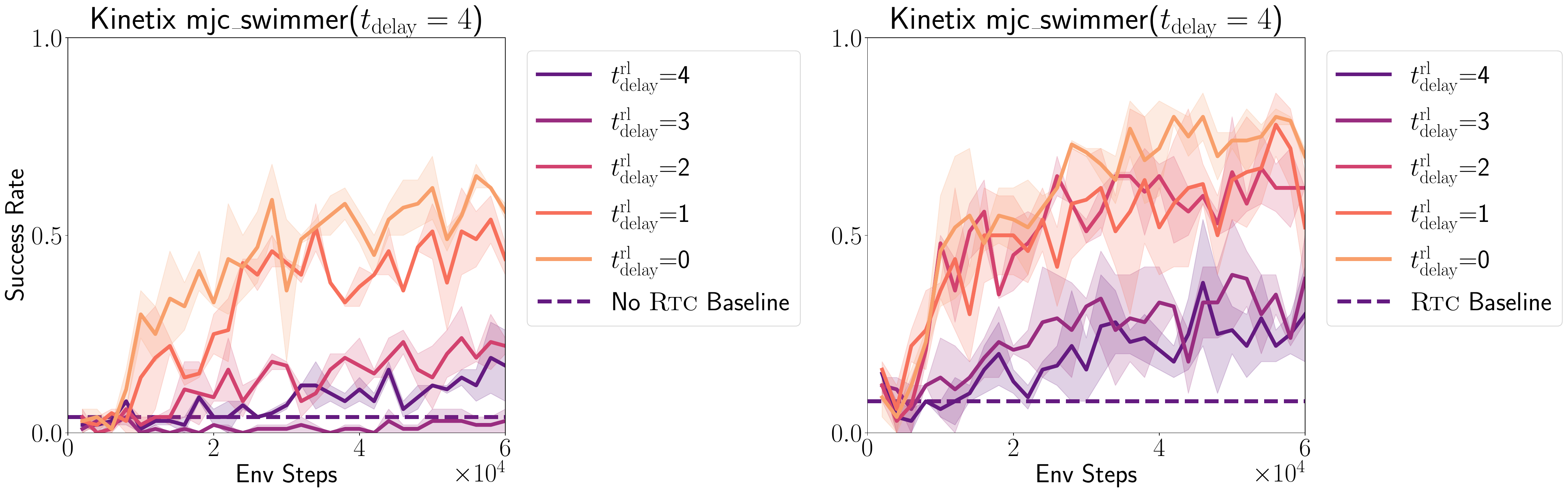}
    \caption{\arli performance at varying RL inference delays on \texttt{mjc\_swimmer}. \textit{Left}: without \rtc. \textit{Right}: with \rtc.}
    \label{fig:rl_infernce_delay_all}
\end{figure*}

We run additional ablations from Section \ref{sec:ablations} for the following questions.

\textbf{How critical is it that we condition $\pirl$ on the intermediate state and actions?} We run the same ablation on all tasks from the main results from Section \ref{sec:sim_experiments} (\texttt{mjc\_swimmer}, \texttt{mjc\_walker}, \texttt{car\_launch}, \alohacube) and also ablate between running \arli with and without \rtc. Figure \ref{fig:state_repr_naive_all} shows ablation results without \rtc and Figure \ref{fig:state_repr_rtc_all} shows ablation results with \rtc. Adding \rtc reduces the amount of additional information that needs to be added to final state representation, but across all tasks and environments we see that including all intermediate information consistently produces the best results.

\textbf{How sensitive is \arli to $\tdelayrl$?} We run the same set of ablations over $\tdelayrl$ on \texttt{mjc\_swimmer} with and without \rtc, Figure \ref{fig:rl_infernce_delay_all} shows the results of the experiments. Here we see that removing \rtc increases the requirement for how up-to-date the intermediate state must be. Between \rtc and no \rtc we see that significant improvements in policy performance only appear when $\tdelayrl < 3$ and $\tdelayrl < 2$, respectively.

\clearpage

\section{Learning Dynamics of \arli Noise Steering}

To better understand the learning behaviour of \arli during training, we analyze the learned \arli policy on the real-world Shoe-in-Bag task over four held-out episodes not seen from the model during training.

In this diagnostic, we focus on the \arli variant conditioned on previous actions only, since the considered held-out trajectories saved the previous-actions information, but not the mid-inference observation; we therefore use the previous-action model so that every checkpoint is evaluated on exactly the same modalities used for training. For Shoe-in-Bag, the learning curves of full \arli and the previous-action-only variant are similar, making this a useful diagnostic for the learned steering behavior. We replay two successful and two failed held-out Shoe-in-Bag episodes through every trained \arli checkpoint. For each replayed inference step, we record the mean and variance of the noise distribution predicted by \arli before it is denoised by the action expert. The first checkpoint considered is after the 24000 offline training steps, corresponding to the first model trained after collecting the initial warm-up episodes in the replay buffer.

The middle column of Figure~\ref{fig:arli_analysis} visualizes how the predicted mean and variance of the policy’s learned noise distribution evolve across training checkpoints for the replayed trajectories. The main cross-episode trend is that failed episodes exhibit a directional shift in the predicted noise distribution that is not present in the same way for successful episodes. On the held-out failures, later \arli checkpoints produce a more displaced and somewhat narrower steering-noise distribution: the predicted noise mean moves farther from zero while the predicted variance decreases. This pattern is consistent with later \arli checkpoints becoming more confident and more biased in particular regions of the failed trajectories.

This shift is localized in time rather than uniformly spread over the whole trajectory. In the heatmaps, the increased mean magnitude and reduced variance are most visible around the phase that requires the largest corrective steering. The first successful episode is smooth and well aligned, so there is little need for \arli to apply a large steering correction; accordingly, it shows the weakest localized signal among the four examples. The second successful episode is still successful but less smooth, which is consistent with the stronger mid-trajectory signal visible in the plots. The first failed episode is close to success: the shoe is only slightly misaligned with the bag opening, but the remaining correction is difficult and ultimately leads to failure. This helps explain why the signal for this episode becomes more pronounced later in training, when the learned steering distribution has become more specialized. The second failed episode represents the typical failure mode of this task, often observed in early failed episodes, where the shoe is substantially misaligned with the bag entrance. In this case, the learned signal is visible earlier in the checkpoint sequence, suggesting that this failure mode is easier for \arli to identify.

The latent-dimension view in the right column of Figure~\ref{fig:arli_analysis} provides a coarse diagnostic of where the checkpoint-dependent changes occur in the learned noise space. To quantify this, we compute the symmetric KL contribution of each latent dimension between the first and final checkpoints, averaged over inference steps. Overall, the drift remains broadly distributed across the latent dimensions, with effective dimension counts between 22.7 and 26.7 out of 32, suggesting that the learned change is not concentrated in only one or two specific coordinates. Failed 2 shows the clearest, though still moderate, concentration: its top three latent dimensions explain 25.3\% of the total early-to-late KL contribution, compared with roughly 19--20\% for the other episodes. We can therefore interpret this mainly as evidence that the learned change is distributed across the latent noise space, with only limited dimension-level localization.

Overall, this analysis suggests that \arli does not merely rescale the pretrained policy's noise input globally, rather it reshapes the noise distribution in a trajectory- and phase-dependent manner. The considered successful episodes show weak localized distributional-drift, while the failed episodes are distinguished by the combination of three effects: the predicted noise mean moves farther from zero, the predicted variance decreases more strongly, and the largest distributional changes concentrate around replay steps corresponding to the middle alignment/insertion portion of the task.

This is consistent with the qualitative observation that failures in Shoe-in-Bag often occur around alignment and insertion rather than at the very beginning of the episode. The findings also align with the role of \arli's intermediate information: because it observes the actions that will be executed during inference and, in the full method, can also use a more recent intermediate state, it can learn steering corrections that are targeted to the effective state at which the next action chunk will be used.

We emphasize that this analysis is intentionally conservative: since it uses only four held-out trajectories replayed through multiple checkpoints of a single task, we interpret it as descriptive evidence about how the learned noise distribution evolves on these episodes, not as a population-level success/failure classifier.

\begin{figure}
    \centering 
    \includegraphics[width=1\linewidth]{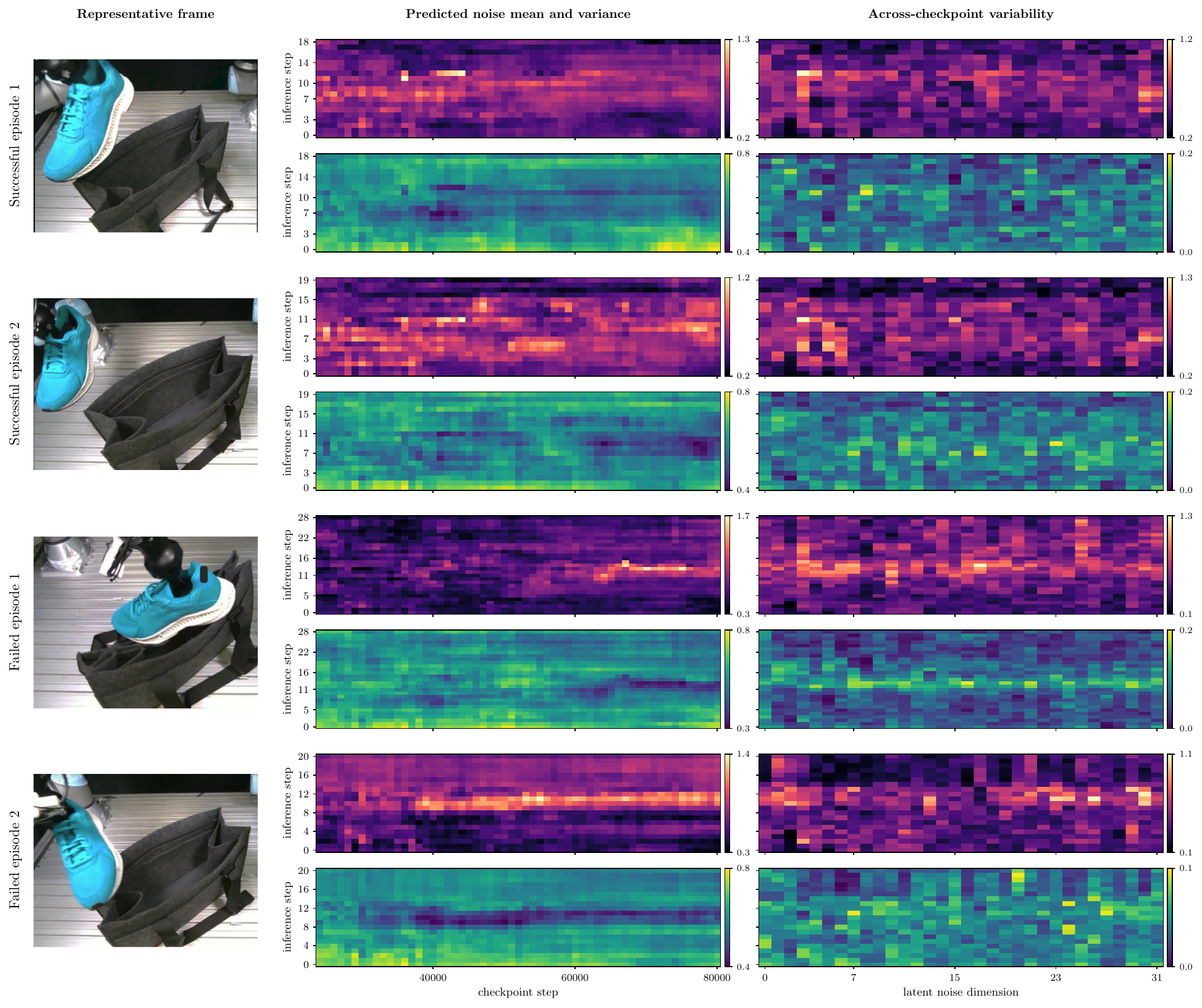}
    \vspace{-1.5em}
    \caption{Checkpoint-wise evolution of the \arli (actions only) noise distribution on four held-out Shoe-in-Bag episodes. \textit{Left}: a representative frame from the middle of each trajectory. \textit{Middle}: predicted noise mean (upper plot) and variance (lower plot) by inference step and checkpoint, averaged over latent noise dimensions. \textit{Right}: across-checkpoint standard deviation of the predicted noise mean (upper plot) and variance (lower plot) by inference step and latent noise dimension.
    }
    \label{fig:arli_analysis}
    \vspace{-1em}
\end{figure}

\newpage

\section{Theoretical Results}
\label{appdix:theory}

Our main theoretical result (Proposition \ref{prop:delay_subopt}) is built on top of the framework in \citet{li2025decoupled}. 

We first start by describing all the existing assumptions (\cref{assumption:data} from \citet{li2025decoupled}) and a crucial new assumption (\cref{def:delay-oracle}) that quantifies the sub-optimality caused by inference delays.

\input{body/theory/statement}

\input{body/theory/proof}

%% file: body/theory/statement.tex
\begin{assumption}[Data Obeys the Transition Dynamics]
\label{assumption:data}
$\mathcal{D} \in \Delta_{\mathcal{T}}$ is a trajectory distribution generated by rolling out a behavior policy from a distribution of $s_t \sim \mu$. The behavior policy can be non-Markovian (\emph{i.e.}, $\pi_\beta(a_{t+h} \mid s_{t:t+h+1}, a_{t:t+h})$). Each subsequent state is generated according to the dynamics of the MDP $\mathcal{M}$: $s_{t+h+1} \sim P(\cdot \mid s_{t+h}, a_{t+h}), \forall h \in \{0, 1, \cdots, k-1\}$.
The resulting trajectory is $(s_t, s_{t+1}, \cdots, s_{t+k}, a_t, a_{t+1}, \cdots, a_{t+k}) \in \mathcal{T} = \mathcal{S}^{k} \times \mathcal{A}^k$. 
\vspace{1mm}
\end{assumption}

\begin{definition}[Strong Open-Loop Consistency]
\label{def:olc}
$\mathcal{D}$ is strongly open-loop consistent if for every $a_{t:t+k} \in \mathrm{supp}(P_\mathcal{D}(a_{t:t+k} \mid  s_t))$, 
\begin{align}
\label{eq:strongoc}
    P(s_{t+k'} \mid s_t, a_{t:t+k'}) =  P_{\mathcal{D}}(s_{t+k'} \mid s_t, a_{t:t+k}), \quad \forall k' \in \{1, 2, \cdots, k\},
\end{align}
where we use $P(s_{t+k'} \mid s_t, a_{t:t+k'})$ to denote the distribution of the future state $s_{t+k'}$ after carrying out the action sequence $a_{t:t+k'}$ in the environment open-loop from $s_t$. 
\end{definition}

\textbf{Notations.}
Following the notations introduced by \citet{li2025decoupled}, we learn an action chunking Q-function, $\hat{Q}^+_{\mathrm{ac}}(s_t, a_{t:t+k})$. In our setting, there is a $d$-time-step delay on the observation $s_t$, so we may only react to the observation information that is $d$-step delayed (\emph{i.e.}, $s_{t-d}$) when carrying out $a_{t}$. Our policy approximates the best action chunk ($a_{t-d:t-d+k}$) that retains the same ``action prefix'' ($a^\bullet_{t-d}, \cdots, a^\bullet_{t-1}$) while maximizing $\hat{Q}^+_{\mathrm{ac}}(s_{t-d}, a_{t-d:t-d+k})$. We formalize this as follows:
\begin{align}
\label{eq:delayed-policy1}
    a^\circ_{t:t+k-d} = {\arg\max}_{\hat{a}_{t:t+k-d}} \hat{Q}^+_{\mathrm{ac}}(s_{t-d}, \hat a_{t-d:t+k-d}),
\end{align}
where 
\begin{align}
\label{eq:delayed-policy2}
    \hat{a}_{t-d:t+k-d} := \underbrace{a^\bullet_{t-d}, a^\bullet_{t-d+1}, \cdots, a^\bullet_{t-1}}_{\text{not used due to time delay}}, \underbrace{\hat{a}_t, \hat{a}_{t+1}, \cdots \hat{a}_{t+{k-d-1}}}_{\text{open-loop execution}}
\end{align}
We denote the policy that produces $a^{\circ}$ the $d$-delayed action chunking policy:
\begin{align}
    \pi_d(s_{t-d}) := a^{\circ}_{t:t+k-d}
\end{align}

\begin{lemma}[Open-loop consistency of $d$-delayed policy]
\label{lemma:olc-ddp}
If $\mathcal{D}$ satisfies \cref{assumption:data} and it is also collected by rolling out an $d$-delayed action chunking policy for any $d \geq 0$, that is
\begin{align}
    P_{\mathcal{D}}(a_{t+d:t+k} \mid s_{t:t+k}) = P_{\mathcal{D}}(a_{t+d:t+k} \mid s_t) = \delta_{\pi(s_t)},
\end{align}
then $\mathcal{D}$ is open-loop consistent.
\end{lemma}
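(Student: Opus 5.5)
The plan is to verify the identity in \cref{def:olc} directly. Fix $s_t$ and a chunk $a_{t:t+k}$ in the support of $P_{\mathcal{D}}(\cdot \mid s_t)$, and fix $k' \in \{1,\dots,k\}$. The key observation is that under a $d$-delayed chunking policy, no action in the chunk depends on any state inside the window $(t, t+k]$. The prefix $a_{t:t+d}$ consists of actions committed by the previous chunk, so it is a function of $s_{t-k+d}$, or more generally of information available no later than $s_t$. The remaining actions $a_{t+d:t+k}$ are given by $\delta_{\pi(s_t)}$ by hypothesis. Conditioned on $s_t$, the whole chunk $a_{t:t+k}$ is therefore generated before, and independently of, the future states $s_{t+1},\dots,s_{t+k}$. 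In the language of \cref{assumption:data}, the non-Markovian behavior policy $\pi_\beta(a_{t+h}\mid s_{t:t+h+1},a_{t:t+h})$ in fact does not depend on $s_{t+1:t+h+1}$.

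With this in hand, I would compute $P_{\mathcal{D}}(s_{t+k'}\mid s_t,a_{t:t+k})$ by writing out the joint law of the trajectory under \cref{assumption:data}:
\begin{align*}
P_{\mathcal{D}}(s_{t+1:t+k},a_{t:t+k}\mid s_t)=\prod_{h=0}^{k-1}\pi_\beta(a_{t+h}\mid s_{t:t+h+1},a_{t:t+h})\,P(s_{t+h+1}\mid s_{t+h},a_{t+h}).
\end{align*}
Since the policy factors do not involve $s_{t+1:t+k}$, they collapse to $P_{\mathcal{D}}(a_{t:t+k}\mid s_t)$. Dividing by this quantity, which is nonzero on the support, leaves $\prod_h P(s_{t+h+1}\mid s_{t+h},a_{t+h})$. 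This is exactly the open-loop rollout law. Marginalizing out $s_{t+1:t+k'-1}$ and $s_{t+k'+1:t+k}$ then gives $P(s_{t+k'}\mid s_t,a_{t:t+k'})$; the later actions $a_{t+k':t+k}$ drop out because the trailing transition factors integrate to one.

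The main obstacle is making the first step rigorous for the prefix $a_{t:t+d}$. Those actions were chosen from an earlier state, and earlier states could in principle be correlated with future states given $s_t$. I would handle this with the Markov property: given $(s_t, a_{t:t+k})$, the future states depend on the past only through $s_t$. Conditioning additionally on the prefix, which is a measurable function of past information, therefore does not change the transition kernel. The actions $a_{t+d:t+k}$ are deterministic functions of $s_t$, so they add no information about the future states. The case $d=0$ reduces to the standard open-loop chunking policy, where the argument is immediate.
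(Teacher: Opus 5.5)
Your proof is correct and follows the paper's argument. Both rest on the same observation: given $s_t$, the entire chunk $a_{t:t+k}$ (the prefix committed earlier plus the suffix $\pi(s_t)$) is independent of the intra-chunk states $s_{t+1:t+k}$, so the conditional state law under $\mathcal{D}$ equals the open-loop rollout law. Your explicit trajectory factorization and your Markov-property treatment of the prefix spell out the step the paper compresses into ``This allows us to establish.''
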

\begin{proof}
$P_{\mathcal{D}}(a_{t+d:t+k} \mid s_{t:t+k}) = P_{\mathcal{D}}(a_{t+d:t+k} \mid s_t)$ implies that the action chunk $a_{t+d:t+k}$ is independent from the intermediate actions $s_{t+1:t+k}$. In addition, we also know that $a_{t:t+d}$ have already been determined before $s_t$ and cannot be changed after observing $s_{t:t+k}$ either. Therefore, we have
\begin{align}
    P_{\mathcal{D}}(a_{t:t+k} \mid s_{t:t+k}) = P_{\mathcal{D}}(a_{t:t+k} \mid s_{t}).
\end{align}
This allows us to establish
\begin{align}
    P_{\mathcal{D}}(s_{t+k'} \mid s_t, a_{t:t+k}) = P(s_{t+k'} \mid s_t, a_{t:t+k}), \quad \forall k'\in \{1, 2, \dots, k\}
\end{align}
as desired.
\end{proof}

\begin{lemma}[Action chunking Q-learning is optimal under $d$-delayed action chunking policy data collection]
\label{lemma:d-optimal}
Let $\mathcal{D}$ be some data distribution collected by a mixture of $d$-delayed policies, and $a^\star_{t:t+k} \in \mathrm{supp}(P_\mathcal{D}(a_{t:t+k} \mid s_t))$ for $a^\star_{t:t+k} = {\arg\max}_{a_{t:t+k}} Q^\star_{\mathrm{ac}}(s_t, a_{t:t+k})$. Then
\begin{align}
    \hat{Q}^+_{\mathrm{ac}}(s_t, a_{t:t+k}) = Q^\star_{\mathrm{ac}}(s_{t}
, a_{t:t+k}), \quad \forall s_t, a_{t:t+k} \in \mathrm{supp}(P_{\mathcal{D}}),
\end{align}
where $\hat{Q}^+_{\mathrm{ac}}$ is the learned value of the non-delayed action chunking policy: $\pi^+_{\mathrm{ac}}:  s_t \mapsto \arg\max_{a_{t:t+k}} \hat{Q}^+_{\mathrm{ac}}(s_t, a_{t:t+k})$, and $Q^\star_{\mathrm{ac}}$ is the optimal value achievable by a non-delayed action chunking policy.
\end{lemma}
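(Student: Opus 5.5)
The plan is to reduce the claim to the open-loop-consistent setting of \citet{li2025decoupled} via \cref{lemma:olc-ddp}, and then run the standard fixed-point argument for the action-chunked Bellman backup. Write $\hat{Q}^+_{\mathrm{ac}}$ as the fixed point of the empirical chunked operator
\begin{align}
(\hat{\mathcal{B}} Q)(s_t, a_{t:t+k}) := \mathbb{E}_{P_{\mathcal{D}}(\cdot \mid s_t, a_{t:t+k})}\Big[R_{t:t+k} + \gamma^{k} \max_{a'_{t+k:t+2k}} Q(s_{t+k}, a'_{t+k:t+2k})\Big],
\end{align}
where the maximum ranges over chunks in $\mathrm{supp}(P_{\mathcal{D}}(\cdot \mid s_{t+k}))$. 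Let $\mathcal{B}^\star$ be the true chunked optimality operator, with expectation under $P(\cdot \mid s_t, a_{t:t+k})$ and maximum over all chunks. By definition, $Q^\star_{\mathrm{ac}}$ is its fixed point.

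\textbf{Step 1 (open-loop consistency).} First I verify that a mixture of $d$-delayed policies still satisfies the hypothesis of \cref{lemma:olc-ddp}. The chunk $a_{t:t+d}$ is committed before $s_t$ is observed. Within each mixture component, $a_{t+d:t+k}$ depends only on $s_t$ and on information available at that time, such as the component index and the past. Neither depends on $s_{t+1:t+k}$. Conditioning on the component and marginalizing therefore gives $P_{\mathcal{D}}(a_{t:t+k}\mid s_{t:t+k}) = P_{\mathcal{D}}(a_{t:t+k}\mid s_t)$. \cref{lemma:olc-ddp} then yields strong open-loop consistency (\cref{def:olc}): for every $a_{t:t+k}$ in the support, the data's conditional law of $(s_{t+1},\dots,s_{t+k})$, and hence of $R_{t:t+k}$, equals the environment's open-loop law. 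Under \cref{assumption:data} this holds jointly over the intermediate states by chaining the one-step kernels.

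\textbf{Step 2 (operators agree on the support).} From Step 1, the expectation in $\hat{\mathcal{B}}$ coincides with the one in $\mathcal{B}^\star$ at every supported $(s_t,a_{t:t+k})$. The remaining difference is that $\hat{\mathcal{B}}$ maximizes only over supported chunks. Since $a^\star_{t:t+k}(s) \in \mathrm{supp}(P_{\mathcal{D}}(\cdot\mid s))$ for every $s$, it suffices to check that $\max_{\mathrm{supp}} Q^\star_{\mathrm{ac}}(s,\cdot) = \max_{\text{all}} Q^\star_{\mathrm{ac}}(s,\cdot)$. This holds, so $Q^\star_{\mathrm{ac}}$ restricted to the support is a fixed point of $\hat{\mathcal{B}}$.

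\textbf{Step 3 (uniqueness).} $\hat{\mathcal{B}}$ is a $\gamma^k$-contraction in sup-norm over functions on $\mathrm{supp}(P_{\mathcal{D}})$. This needs $\gamma<1$, or bounded horizons with an induction on remaining steps. Its fixed point is therefore unique, which gives $\hat{Q}^+_{\mathrm{ac}} = Q^\star_{\mathrm{ac}}$ on the support.

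I expect Step 1 to be the main obstacle: justifying that mixing over delayed policies, and that the prefix $a_{t:t+d}$ having been chosen from the earlier state $s_{t-d}$, do not create correlation between the chunk and $s_{t+1:t+k}$ given $s_t$. The key point to make carefully is that $s_{t+1:t+k}$ is generated from $s_t$ and the chunk by the Markov kernel alone, so any dependence of the chunk on past information is screened off once we condition on $s_t$ and the chunk. This gives exactly $P_{\mathcal{D}}(s_{t+k'}\mid s_t,a_{t:t+k}) = P(s_{t+k'}\mid s_t,a_{t:t+k'})$.
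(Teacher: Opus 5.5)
Your Steps 2--3 reach the conclusion by a different, more self-contained route than the paper.

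The paper first establishes open-loop consistency. It then cites Theorem 1 of \citet{li2025decoupled} to get $\hat{Q}^+_{\mathrm{ac}} = Q^+_{\mathrm{ac}}$ on the support, where $Q^+_{\mathrm{ac}}$ is the true value of the greedy policy $\pi^+_{\mathrm{ac}}$. It closes with a one-sentence remark: since $a^\star_{t:t+k}$ is in support, the value-maximizing chunk is the optimal one, so $Q^+_{\mathrm{ac}} = Q^\star_{\mathrm{ac}}$.

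You never pass through $Q^+_{\mathrm{ac}}$. You check directly that $Q^\star_{\mathrm{ac}}$ is a fixed point of the in-support chunked backup, because its in-support maximum equals its global maximum. You then conclude by uniqueness of the fixed point of a $\gamma^k$-contraction. This inlines the cited theorem and spells out the fixed-point reasoning that the paper's last sentence leaves implicit. It also exposes assumptions the citation hides: the backup must maximize only over supported chunks, $a^\star$ must be in support at every successor state $s_{t+k}$, and $\gamma<1$ (or a finite horizon) is needed. The paper's version is shorter and inherits whatever generality the cited theorem has.

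One sub-step of your Step 1 is wrong as written, though your own last paragraph repairs it. You propose to show that the mixture satisfies the hypothesis of \cref{lemma:olc-ddp}, claiming that marginalizing over components gives $P_{\mathcal{D}}(a_{t:t+k}\mid s_{t:t+k}) = P_{\mathcal{D}}(a_{t:t+k}\mid s_t)$. Neither claim holds in general:
\begin{itemize}
\item The hypothesis requires the completion to be a Dirac $\delta_{\pi(s_t)}$, which a nontrivial mixture is not.
\item If two components play different chunks from the same $s_t$, the states $s_{t+1:t+k}$ they generate generically reveal which chunk was played. Conditioning on those states therefore changes the posterior over chunks.
\item The prefix $a_{t:t+d}$ has the same problem, since it depends on an earlier state rather than on $s_t$.
\end{itemize}
The fact that no action is chosen after seeing $s_{t+1:t+k}$ is a causal statement, not conditional independence given $s_t$.

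What you actually need is the factorization in your final paragraph. Every action in $a_{t:t+k}$ is fixed by time $t$. So the joint law is the law of the history, $s_t$ and $a_{t:t+k}$, times $\prod_{h=0}^{k-1}P(s_{t+h+1}\mid s_{t+h},a_{t+h})$. Conditioning on $(s_t,a_{t:t+k})$ then gives the open-loop kernel, for each component and for the mixture alike. This is also the sound justification of \cref{lemma:olc-ddp} itself, whose proof asserts the same identity for the prefix.

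Alternatively, follow the paper: apply the lemma per component, where each component is deterministic so the Dirac hypothesis holds. Then use that open-loop consistency is closed under mixtures. Here $P_{\mathcal{D}}(s_{t+k'}\mid s_t,a_{t:t+k})$ is an average of component conditionals weighted by the posterior over components, and only components whose support contains $a_{t:t+k}$ receive weight. With either replacement your argument goes through.
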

\begin{proof}
We first observe that any mixture of open-loop consistent data distribution is also open-loop consistent. It is not hard to conclude that $\mathcal{D}$ is open-loop consistent due to \cref{lemma:olc-ddp}. Now, we can reuse Theorem 1 from \citet{li2025decoupled} to conclude that
\begin{align}
    \hat{Q}_{\mathrm{ac}}^+(s_t, a_{t:t+k}) = Q^+_{\mathrm{ac}}(s_t, a_{t:t+k}),  \quad \forall s_t, a_{t:t+k} \in \mathrm{supp}(P_\mathcal{D}),
\end{align}
where $Q^+_{\mathrm{ac}}$ is the true value of the non-delayed action chunking policy $\pi^+_{\mathrm{ac}}$. 

Now, since we have assumed that the optimal action chunking policy data is in the support, the value maximizing action chunk must be the optimal action chunk. We can now conclude that 
\begin{align}
    \hat{Q}^+_{\mathrm{ac}}(s_t, a_{t:t+k}) = Q^\star_{\mathrm{ac}}(s_{t}
, a_{t:t+k}), \quad \forall s_t, a_{t:t+k} \in \mathrm{supp}(P_{\mathcal{D}})
\end{align}
as desired.
\end{proof}

The implication of this result is that we can use the data collected by any $d$-delayed action chunking policy to learn optimal action chunking Q-function as long as the data covers some optimal action chunking policy's behavior. However, even with the optimal action chunking Q-function, such optimal action chunking policy is not achievable because we have a delay of $d$ time steps in our setting. 

\begin{definition}[Delayed Oracle Optimality Gap]
\label{def:delay-oracle}
An MDP $\mathcal{M}$ exhibits $\omega_d$-delayed optimality gap if for any $s_t, a_{t:t+k}$, 
\begin{align}
\left|Q^\star_{\mathrm{ac}}(s_t, a_{t:t+k}) - \mathbb{E}_{P(\cdot \mid s_{t}, a_{t:t+k})}[R_{t:t+k-d} + \gamma^{h-d} V^\star_{\mathrm{ac}}(s_{t+k-d}, a_{t+k-d:t+k}) \right| \leq \omega_d,
\end{align}
where $V^\star_{\mathrm{ac}}(s_t, a_{t:t+d}) = {\max_{a_{t+d:t+k}}} Q^\star_{\mathrm{ac}}(s_t, a_{t:t+k})$.
\end{definition}
Intuitively, $\omega_d$ upper-bounds the amount of sub-optimality resulted from a delayed decision. The left-hand-side (LHS) of the inequality quantifies the maximum value achievable when the decision is not delayed and at a regular interval of $k$ (\emph{i.e.}, make decision at $s_{t}, s_{t+k}, \cdots$). The right-hand-side (RHS) of the inequality quantifies the maximum value achievable when the decision is delayed (\emph{i.e.}, $a_{t+k:t+2k-d}$ is based on the delayed $s_t$) and because of the delay, the prefix of the action chunk is pre-determined (\emph{i.e.}, $a_{t+k-d:t+k}$).

\begin{theorem}[Delayed Action Chunking Policy is Near-optimal]
Let $\mathcal{D}$ be some data distribution colleted by a mixture of $d$-delayed policies, and the following two support assumptions hold:
\begin{enumerate}
    \item Optimal action chunks are in-support:
    \begin{align}
    \label{eq:supp-as1}
        a^\star_{t:t+k} \in \mathrm{supp}(P_{\mathcal{D}}(\cdot \mid s_t)),
    \end{align}
    where $a^\star_{t:t+k} = \pi^\star_{\mathrm{ac}}(s_t)$.
    \item Optimal action chunk \emph{completions} for in-support delayed action prefix are also in-support:
    \begin{align}
    \label{eq:supp-as2}
        a^\star_{t+d:t+k} \in \mathrm{supp}(P_{\mathcal{D}}(\cdot \mid s_t, a_{t:t+d})), \quad \forall a_{t:t+d} \in \mathrm{supp}(P_{\mathcal{D}}(\cdot \mid s_t)),
    \end{align}
    where $a^\star_{t+d:t+k} = {\arg\max}_{a_{t+d:t+k}} Q^\star_{\mathrm{ac}}(s_t, a_{t:t+k})$.
\end{enumerate}
If additionally, the MDP exhibits $\omega_d$-delayed optimality gap, then the value of the $d$-delayed action chunking policy $\pi^+_d$, $V^+_d$ satisfies
\begin{align}
    |V^+_d(s_t, a_{t:t+d}) - V^\star_{\mathrm{ac}}(s_t, a_{t:t+d}) | \leq \frac{\omega_d}{1-\gamma^{k-d}}, \quad \forall s_t, a_{t:t+d} \in \mathrm{supp}(P_{\mathcal{D}}),
\end{align}
where $V^\star_{\mathrm{ac}}(s_t, a_{t:t+d}) := {\max}_{a_{t+d:t+k}} Q^\star_{\mathrm{ac}}(s_t, a_{t:t+k})$ and $V^+_d(s_t, a_{t:t
+d}):= Q^+_d(s_t, \tilde a_{t:t+k})$ with $\tilde a_{t:t+k} = [a_{t:t+d}, a^\star_{t+d:t+k}]$, $a^\star_{t+d:t+k} = {\arg\max}_{a_{t+d:t+k}} Q^+_{\mathrm{ac}}(s_t, a_{t:t+k})$, and $Q^+_d$ is the fixed point of the following bellman equation:
\begin{align}
    Q^+_d(s_t, a_{t:t+k}) = \mathbb{E}_{P(\cdot \mid s_{t}, a_{t:t+k})}[R_{t:t+k-d} + \gamma^{k-d} Q^+_d(s_{t+k-d}, \tilde a_{t+k-d:t+2k-d})],
\end{align}
where again $\tilde a_{t+k-d:t+2k-d} := [a_{t+k-d:t+k}, a^\star_{t+k:t+2k-d}]$ with $a^\star_{t+k:t+2k-d} = {\arg\max}_{a_{t+k:t+2k-d}} Q^+_{\mathrm{ac}}(s_{t+k-d}, a_{t+k-d:t+2k-d})$
\end{theorem}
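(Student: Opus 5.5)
The plan is a standard contraction argument comparing two Bellman-type recursions with the same discount factor $\gamma^{k-d}$.

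\textbf{Step 1: replace the learned $Q$ by the optimal one.} By the first support assumption \eqref{eq:supp-as1} and \cref{lemma:d-optimal}, we have $\hat Q^+_{\mathrm{ac}} = Q^+_{\mathrm{ac}} = Q^\star_{\mathrm{ac}}$ on $\mathrm{supp}(P_{\mathcal{D}})$. The second support assumption \eqref{eq:supp-as2} ensures that for every in-support prefix $a_{t:t+d}$, the maximizing completion $a^\star_{t+d:t+k}$ is itself in support. Hence the argmax defining the completion $\tilde a$ in the Bellman equation for $Q^+_d$ coincides with the true optimal completion under $Q^\star_{\mathrm{ac}}$. Consequently
\[
V^\star_{\mathrm{ac}}(s,a_{t:t+d}) = Q^\star_{\mathrm{ac}}(s,\tilde a_{t:t+k}) \quad\text{and}\quad V^+_d(s,a_{t:t+d}) = Q^+_d(s,\tilde a_{t:t+k}),
\]
with the same $\tilde a$ in both.

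\textbf{Step 2: write the recursion for the error.} Define $\Delta(s_t,a_{t:t+k}) := Q^+_d(s_t,a_{t:t+k}) - Q^\star_{\mathrm{ac}}(s_t,a_{t:t+k})$ on the support. Start from the Bellman equation for $Q^+_d$, then add and subtract
\[
\mathbb{E}\bigl[R_{t:t+k-d} + \gamma^{k-d} V^\star_{\mathrm{ac}}(s_{t+k-d},a_{t+k-d:t+k})\bigr].
\]
By Step 1, $V^\star_{\mathrm{ac}}(s_{t+k-d},a_{t+k-d:t+k}) = Q^\star_{\mathrm{ac}}(s_{t+k-d},\tilde a_{t+k-d:t+2k-d})$. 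This gives
\[
\Delta(s_t,a_{t:t+k}) = \mathbb{E}\bigl[\gamma^{k-d}\Delta(s_{t+k-d},\tilde a_{t+k-d:t+2k-d})\bigr] + \Bigl(\mathbb{E}\bigl[R_{t:t+k-d}+\gamma^{k-d}V^\star_{\mathrm{ac}}(\cdot)\bigr] - Q^\star_{\mathrm{ac}}(s_t,a_{t:t+k})\Bigr).
\]
The last term is bounded in absolute value by $\omega_d$, by \cref{def:delay-oracle}. (Read $\gamma^{h-d}$ there as $\gamma^{k-d}$, as in the main-text version.)

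\textbf{Step 3: conclude.} Take the supremum over in-support pairs. This needs closure of the support under the dynamics: the next state $s_{t+k-d}$ together with the committed prefix $a_{t+k-d:t+k}$ must be in support. This holds because the data are generated by $d$-delayed policies satisfying \cref{assumption:data}, and their open-loop consistency is given by \cref{lemma:olc-ddp}. The recursion then yields
\[
\|\Delta\|_\infty \le \gamma^{k-d}\|\Delta\|_\infty + \omega_d, \qquad\text{so}\qquad \|\Delta\|_\infty \le \frac{\omega_d}{1-\gamma^{k-d}}.
\]
Evaluating at $\tilde a_{t:t+k}$ gives the claimed bound on $|V^+_d - V^\star_{\mathrm{ac}}|$.

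For a fully rigorous version, one can instead unroll the recursion $N$ times and sum the geometric series. The remainder term $\gamma^{N(k-d)}\|\Delta\|_\infty$ vanishes provided rewards are bounded and $\gamma<1$; alternatively, use the fact that the Bellman operator for $Q^+_d$ is a $\gamma^{k-d}$-contraction, so $\Delta$ is finite.

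\textbf{Main obstacle.} The delicate step is Step 1 together with the support-closure claim in Step 3. We must make sure that the completion used by $\pi^+_d$, which is chosen via the learned $\hat Q^+_{\mathrm{ac}}$, is exactly the maximizer appearing in $V^\star_{\mathrm{ac}}$. We must also make sure that every state–prefix pair reached by the recursion stays within $\mathrm{supp}(P_{\mathcal D})$, where \cref{lemma:d-optimal} applies. I would handle this by an induction along the rollout of $\pi^+_d$. Assumption \eqref{eq:supp-as2} gives the inductive step: the completion is in support. The data-generating process gives support of the successor: the delayed policies commit prefixes exactly as $\pi^+_d$ does.
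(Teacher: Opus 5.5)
Your proposal is correct and follows essentially the same route as the paper. Both arguments use the first support assumption and the action-chunking $Q$-learning optimality lemma to identify $\hat Q^+_{\mathrm{ac}}$ with $Q^\star_{\mathrm{ac}}$, use the second support assumption to make the learned completion coincide with the $Q^\star_{\mathrm{ac}}$-optimal one, bound the Bellman residual of $Q^+_d$ by $\omega_d$, and close with the $\gamma^{k-d}$ contraction to get $\omega_d/(1-\gamma^{k-d})$. Your explicit handling of support closure along the rollout, of the finiteness of $\|\Delta\|_\infty$, of the final evaluation at $\tilde a_{t:t+k}$, and of the $\gamma^{h-d}$ typo fills in steps the paper's proof leaves implicit.
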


\begin{proof}
We first note that due to the first support assumption (\cref{eq:supp-as1}), we can trigger \cref{lemma:d-optimal} to conclude that $\hat{Q}^+_{\mathrm{ac}}(s_t, a_{t:t+k}) = Q^\star_{\mathrm{ac}}(s_t, a_{t:t+k})$ for in-support $s_t, a_{t:t+k}$. On top of that, due to the second support assumption (\cref{eq:supp-as2}), we know that the optimal action chunk completion would also be in-support, which means that
\begin{align}
    {\arg\max}_{a_{t+d:t+k}} \hat{Q}^+_{\mathrm{ac}}(s_t, a_{t:t
    +k}) = {\arg\max}_{a_{t+d:t+k}} Q^\star_{\mathrm{ac}}(s_t, a_{t:t
    +k}), \quad \forall s_t, a_{t:t+d} \in \mathrm{supp}(P_{\mathcal{D}}),
\end{align}
so we can use $a^\star_{t+d:t+k}$ to denote the optimal action chunk completion for both value function.

We first write out
\begin{align}
    Q^+_d(s_t, a_{t:t+k}) = \mathbb{E}_{P(\cdot \mid s_{t}, a_{t:t+k})}[R_{t:t+k-d} + \gamma^{k-d} Q^+_d(s_{t+k-d}, [a_{t+k-d:t+k}, a^\star_{t+k:t+2k-d}])],
\end{align}
and compare with 
\begin{align}
    X = \mathbb{E}_{P(\cdot \mid s_{t}, a_{t:t+k})}[R_{t:t+k-d} + \gamma^{k-d} Q^\star_{\mathrm{ac}}(s_{t+k-d}, [a_{t+k-d:t+k}, a^\star_{t+k:t+2k-d}])].
\end{align}
From the $\omega_d$-delayed optimality assumption, we know that
\begin{align}
    |X - Q^\star_{\mathrm{ac}}(s_t, a_{t:t+k})| \leq \omega_d.
\end{align}

Set $\tilde a_{t+k-d:t+2k-d} = [a_{t+k-d:t+k}, a^\star_{t+k:t+2k-d}]$. We can now bound the difference between $Q^+_d$ and $Q^\star_{\mathrm{ac}}(s_t, a_{t:t+k})$.
\begin{align}
    &|Q^+_d(s_t, a_{t:t+k}) - Q^\star_{\mathrm{ac}}(s_t, a_{t:t+k})| \\
    & \quad \leq \mathbb{E}_{P(\cdot \mid s_{t}, a_{t:t+k})}[\omega_d + \gamma^{k-d}|Q^+_d(s_{t+k-d}, \tilde a_{t+k-d:t+2k-d}) - Q^\star_{\mathrm{ac}}(s_{t+k-d}, \tilde a_{t+k-d:t+2k-d})|] \\
    &\quad \leq \frac{\omega_d}{1-\gamma^{k-d}}.
\end{align}
This means that
\begin{align}
    |V^+_d(s_t, a_{t:t+d}) - V^\star_{\mathrm{ac}}(s_t, a_{t:t+d})| \leq \frac{\omega_d}{1-\gamma^{k-d}}, \quad \forall s_t, a_{t:t+d} \in \mathrm{supp}(P_{\mathcal{D}}).
\end{align}

\end{proof}